\title{Path classification by stochastic linear recurrent neural networks}
\date{}
\author{Wiebke Bartolomaeus}
\address[Wiebke Bartolomaeus]{RWTH Aachen University\\
Chair for Mathematics of Information Processing\\ 
Pontdriesch 10\\
52062 Aachen, Germany.}
\email{wiebke.bartolomaeus@rwth-aachen.de} 
\author{Youness Boutaib}
\address[Youness Boutaib]{RWTH Aachen University\\
Chair for Mathematics of Information Processing\\ 
Pontdriesch 10\\
52062 Aachen, Germany.}
\email{boutaib@mathc.rwth-aachen.de}
\author{Sandra Nestler}
\address[Sandra Nestler]{Institute of Neuroscience and Medicine (INM-6) and Institute for Advanced Simulation (IAS-6) and JARA-Institute ``Brain Structure-Function Relationships'' (INM-10)\\
J\"ulich Research Centre\\ J\"ulich, Germany.}
\address{RWTH Aachen University\\
Aachen, Germany.}
\email{s.nestler@fz-juelich.de}   
\author{Holger Rauhut}
\address[Holger Rauhut]{RWTH Aachen University\\
Chair for Mathematics of Information Processing\\ 
Pontdriesch 10\\
52062 Aachen, Germany.}
\email{rauhut@mathc.rwth-aachen.de}
\thanks{Y. Boutaib, S. Nestler and H. Rauhut are grateful for the financial support of the Excellence Initiative of the German federal and state governments (G:(DE-82)EXS-SFneuroIC002: "Recurrence and stochasticity for neuro-inspired computation".)}
\newtheorem{theo}{Theorem}[section]
\newtheorem{Cor}[theo]{Corollary}
\newtheorem{lemma}[theo]{Lemma}
\newtheorem{notation}[theo]{Notation}
\newtheorem{Prop}[theo]{Proposition}
\newtheorem{rem}[theo]{Remark}
\numberwithin{figure}{section}
\newcommand{\inner}[2]{\left\langle #1, #2 \right\rangle}
\keywords{Recurrent Neural Networks; Risk Bounds; Agnostic PAC Learnability; Empirical Risk Minimisation; Rademacher Complexity; Signatures}
\subjclass{Primary: 68T07, 68Q32; Secondary: 92B20, 60L10}
\begin{document}	
\maketitle
	\begin{abstract} 
	We investigate the functioning of a classifying biological neural network from the perspective of statistical learning theory, modelled, in a simplified setting, as a continuous-time stochastic recurrent neural network (RNN) with identity activation function. In the purely stochastic (robust) regime, we give a generalisation error bound that holds with high probability, thus showing that the empirical risk minimiser is the best-in-class hypothesis. We show that RNNs retain a partial signature of the paths they are fed as the unique information exploited for training and classification tasks. We argue that these RNNs are easy to train and robust and back these observations with numerical experiments on both synthetic and real data. We also exhibit a trade-off phenomenon between accuracy and robustness.
	\end{abstract}
	%%%%%%%%%%%%%%%%%%%%%%%%%%%%%%%%%%%%%%%%%%%%%%%%%%%%%%%%
	%%%%%%%%%%%%%%%%%%%%%%%%%%%%%%%%%%%%%%%%%%%%%%%%%%%%%%%%
	%%
	% Introduction and motivation
	%%
	%%%%%%%%%%%%%%%%%%%%%%%%%%%%%%%%%%%%%%%%%%%%%%%%%%%%%%%%
	%%%%%%%%%%%%%%%%%%%%%%%%%%%%%%%%%%%%%%%%%%%%%%%%%%%%%%%%
	\section{Introduction}
	Recurrent neural networks (RNNs) constitute the simplest machine learning paradigm that is able to handle variable-length data sequences while tracking long term dependencies and taking into account the temporal order of the received information. These data streams appear naturally in many fields such as (audio or video) signal processing or financial data. The RNN architecture is inspired from biological neural networks where both recurrent connectivity and stochasticity in the temporal dynamics are ubiquitous. Despite the empirical success of RNNs and their many variants (long short-term memory networks (LSTMs), gated recurrent units (GRUs), etc.), several fundamental mathematical questions related to the functioning of these networks remain open: \begin{itemize}
	\item What is the exact type of information that an RNN learns from the input sequences?
	\item Training artificial RNNs with classical methods like gradient descent suffer from fundamental problems such as instability, non-convergence, exploding gradient errors \cite{Doya} and plateauing \cite{LHL}. On the other hand, biological networks seem to be robust and easy to train. How does stochasticity contribute in regard to this?
	\item What is the amount of data needed for such a network to achieve a small estimation error with high probability?
	\end{itemize}
	In the current paper, we set out to answer these questions by modelling a biological neural network as a continuous-time (stochastic) RNN with a randomly chosen connectivity matrix and an identity activation function in view of classifying data streams (in this case, time-dependent paths.) Let us say a few words about each of our three working assumptions:
	\begin{itemize}
	\item The continuous-time dynamics are a generalisation of the classical discrete-time dynamics frequently encountered in the literature \cite{GGO}. The latter can be seen as an Euler discretisation of the former as the data stream gets sampled at shorter time intervals. Working with continuous-time dynamics provides us with a richer mathematical toolbox while still being applicable to the discrete-time case and keeping key features and issues of such systems such as the dependence on the whole data sequence and its order. 
	\item Randomly generating the connectivity matrix of an RNN is the cornerstone of reservoir computing \cite{JH, MJS}. This paradigm is based on the idea that universal approximation properties can be achieved for several dynamical systems without the need to optimise all parameters and has shown exceptional performances in a variety of tasks. This working assumption has also the benefit of simplifying the training process (as will be clear from the formulas in this paper, optimising over this matrix is computationally heavy, even in the linear case.) This will consist in our case in finding a pre-processing projection vector and the parameters of a read-out map. This simplicity can be practically exploited for instance to deploy the same network to deal with several tasks (i.e. multi-tasking) without the need for heavy retraining or storing a large number of parameters, in a fashion that is reminiscent of biological networks. Compared to the existing literature (e.g. \cite{DCREA}), we included the pre-processing map (input projection vector) as a tunable parameter in order to increase the performance compared to a classical reservoir computer.
	\item We choose to work with identity activation functions in order to build the intuition as to the answer to the questions above. In this case, we obtain precise formulas. We aim to generalise the results of this study  to the non-linear case in a later work.
\end{itemize}

	Before setting out our roadmap, let us note for the sake of completeness that there exists a number of ways in which one may avoid altogether recurrent architectures in order to handle data streams and use instead a feed-forward network, which is a more studied and understood paradigm. These are usually based on the transformation of paths into fixed length vectors that can then be fed to the feed-forward structure. In particular, we cite the Independent Component Analysis \cite{HM, HH, OS}, the signature methods \cite{Graham, KO, LLN} and the PCA-type dimension reduction introduced in \cite{BHKS}. However, these methods work best when the whole signal is processed (which may be computationally heavy) before being fed to the network while RNNs are able to work with these signals in a continuous manner as they come, rendering them more suitable to real-time situations. As to the approximation properties of these recurrent architectures, there are several works which go into the direction of indicating that such properties may also hold for the path classification problem treated in this paper, although a precise statement in the stochastic case, which is of interest to us, is still missing. For example, rigorous results providing the approximation properties of (discrete-time) RNNs with a randomly generated connectivity matrix can be found in \cite{GGO2}. In \cite{FN}, it is shown that every continuous path can be approximated (in the uniform convergence norm) as the outcome of an RNN with a suitable activation function while, more recently in \cite{LHL}, the authors show that an RNN with the identity activation function can approximate any functional on a path space provided it is continuous, linear, regular and time-homogeneous.\\
	
	We will approach the problem of the binary classification of continuous-time paths with RNNs in the presence of noise from the point of view of statistical learning theory. After introducing the necessary mathematical notation and the learning setup (model, loss function, etc.) in Section \ref{sec:LearnSetup}, we will give a generalisation error bound that holds with high probability in Section \ref{Sec:GenErrorBound}. The uniform bound that we derive controls the difference between the risk of a hypothesis and its empirical counterpart and answers practical questions concerning the size of the sample and the bounds on the pre-processing and read-out maps needed to achieve a certain accuracy. Consequently, minimising the empirical risk achieves agnostic PAC learnability and gives guarantees on the ability of the empirical risk minimiser to generalise to unseen data. Section \ref{Sec:StudyERM} looks more in details into the empirical risk minimisation (ERM) procedure: 
	\begin{itemize}
		\item we compare its output to that of the popular Support Vector Machine (SVM) considered for example in \cite{NKDGRH},
		\item argue heuristically that noise, which is a natural assumption in modelling biological neural networks, provides stability and robustness against different types of perturbations to the dataset,
		\item show rigorously that in the linear case, the RNN retains a ``partial signature'' of the time-lifted input signal as global information about said signal. The empirical risk is a function of the tunable parameters of the model and the partial signatures of the training data.
	\end{itemize}
	Finally, we look into the numerical minimisation of the empirical risk using gradient descent in Section \ref{sec:num_exp}. With the explicit formulas we obtain, the global effect through time of the tunable parameters on the loss function is taken into account and we do not need some sort of unrolling of the network to apply back-propagation through time. The experiments are performed using the Japanese vowels dataset and classes of trigonometric polynomials.
	%%%%%%%%%%%%%%%%%%%%%%%%%%%%%%%%%%%%%%%%%%%%%%%%%%%%%%%%
	%%%%%%%%%%%%%%%%%%%%%%%%%%%%%%%%%%%%%%%%%%%%%%%%%%%%%%%%
	%%
	% The learning setup
	%%
	%%%%%%%%%%%%%%%%%%%%%%%%%%%%%%%%%%%%%%%%%%%%%%%%%%%%%%%%
	%%%%%%%%%%%%%%%%%%%%%%%%%%%%%%%%%%%%%%%%%%%%%%%%%%%%%%%%
	\section{The learning setup}\label{sec:LearnSetup}
	%%%%%%%%%%%
	%%
	% The recurrent network input-output map
	%%
	%%%%%%%%%%%
	\subsection{The recurrent network input-output map}
	Let $r$, $n$ and $d$ be integers. The input and the hidden state of the RNN are modelled, respectively, as $r$ and $n$-dimensional time-dependent continuous paths $x$ and $y$. Given a filtered probability space $(\Omega,\mathcal{A}, (\mathcal{F}_t)_{0 \leq t \leq T},\mathbb{P})$, a simple continuous-time model describing the time-evolution of input-output dynamics is given by the following stochastic differential equation (SDE) (which can be seen as a stochastic version of the one in \cite{SCS})
	\begin{equation}\label{eq:MainSDE}
	\mathrm{d}y(t)=(-y(t) + W\phi (y(t)) + u(x(t)))\mathrm{d}t+\Sigma \mathrm{d}B(t),\quad t\leq T.
	\end{equation}
	with initial condition taken to be $y(0)=0$. Here, $u\in \mathcal{L}(\mathbb{R}^{r}, \mathbb{R}^{n})$ is a linear pre-processing map (identified as a matrix in $\mathbb{R}^{n\times r}$), $W\in \mathbb{R}^{n\times n}$ is the network matrix that models the connection strength between neurons, $\phi$ is an activation function (applied element-wise) and $\Sigma\in \mathbb{R}^{n\times d}$ is a matrix (which we will call the noise matrix) describing the random effect of a $d$-dimensional Brownian noise $B$. In this paper, we will consider the linear case when $\phi$ is the identity function. The more interesting case where $\phi$ is non-linear will be the subject of a future work.
	
	Given a path $x$, we will denote by $y(T,x)$ (or $y_u(T,x)$ to emphasise the dependence on the pre-processing map $u$) the terminal value (i.e. at time $T$) of the solution to the equation (\ref{eq:MainSDE}). A readout map $h$ is then combined with the final hidden state of the neural network to produce a prediction $v=h(y(T,x))$. In our case, $h$ will be a labelling function, and more specifically, a hyperplane classifier.
	\begin{figure}[h!]\label{Fig:RNNArchi}
	\centering
	\includegraphics[scale=0.6]{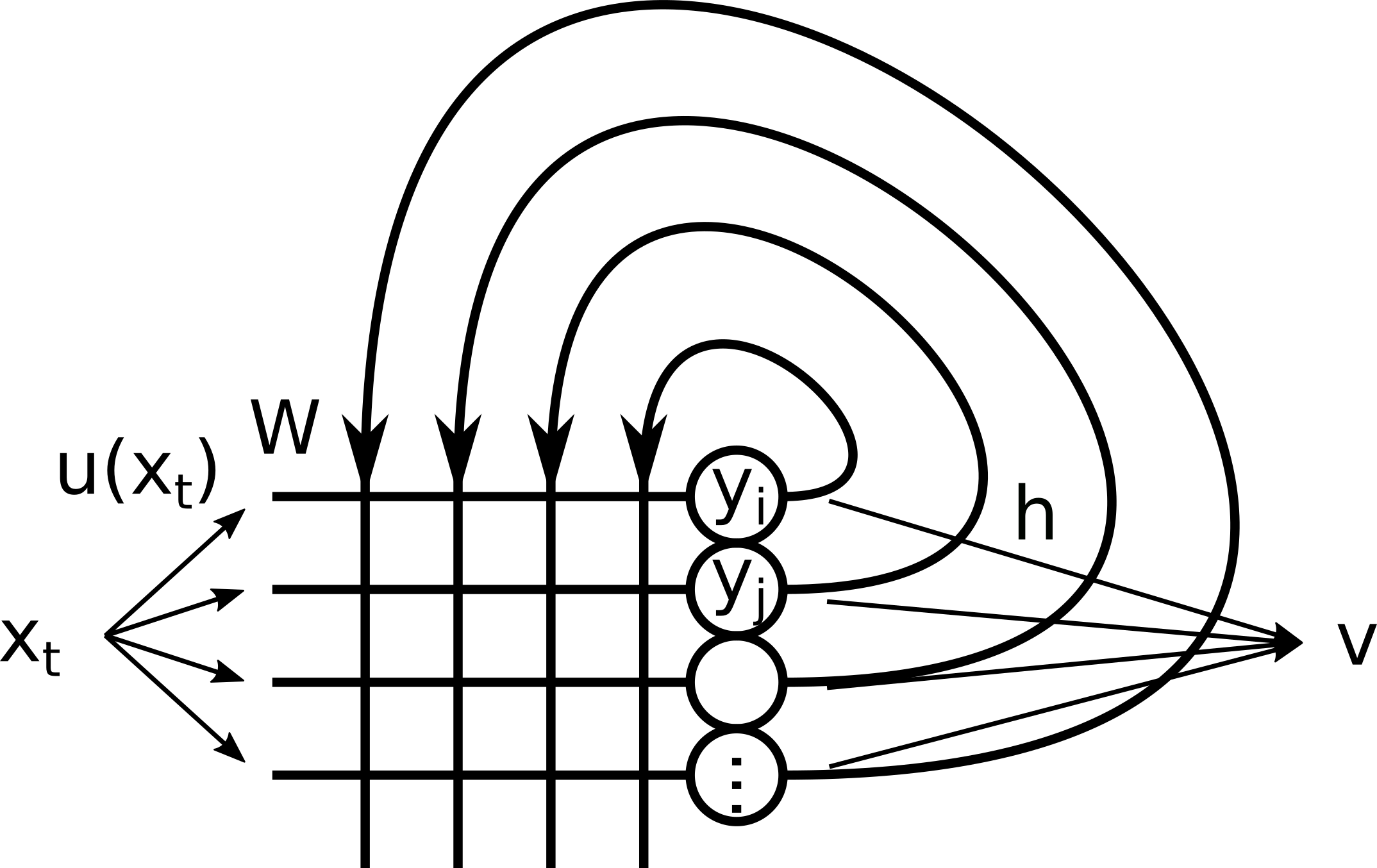}
	\caption{A sketch of the recurrent learning architecture}
	\end{figure}
	
	Given a training set of labelled inputs, we aim to train this network by changing the values of the parameters in order to increase the accuracy of future predictions (in a sense to be made clear in the following subsections). In the classical framework of reservoir computing, the network's tunable parameter is the hyperplane $h$ while the connectivity matrix $W$ and the pre-processing map $u$ are generated randomly. In our case, we aim to increase the performance by considering $u$ a tunable parameter to be optimised according to the learning task at hand.
	%%%%%%%%%%%
	%%
	% Hypothesis class and read-out maps
	%%
	%%%%%%%%%%%
	\subsection{Hypothesis class and read-out maps}
	As we have alluded to above, our global hypothesis class $\mathcal{H}$ comprises of maps that can be written as the composition of the reservoir-solution map $y_u(T,.)$ and a read-out map $h$ chosen from a read-out hypothesis class $\mathcal{H}^*$. As the read-out map $h$ will be applied to the random vector $y_u(T,x)$, we can think of the hypothesis class $\mathcal{H}$ as a class of random learners
	\[\begin{array}{cccl}
	H:& \mathcal{X} & \longrightarrow & \mathcal{V}^{\Omega},\\
		& x						& \longmapsto	& h(y_u(T,x))=v(x),\\
	\end{array}
	\]
	where $\mathcal{X}$ denotes the space of input paths and $\mathcal{V}$ the target set of outputs; for example the labels $\{-1,+1\}$ as will be in our case. If one identifies random variables and their probability distributions, then one may think of the result of the learners applied to an input $x$ as probability distributions instead of a single label. In our very simple setting, this translates into thinking of the hypothesis $H$ as a regression function	$x \longmapsto \mathbb{P}(H(x)=1)\in [0,1]$.	This discussion fits into the framework of probabilistic binary classification or in the wider one of probabilistic supervised learning as introduced in \cite{GKMO}. However, let us emphasize the fundamental difference that in our case we label inputs based on one realisation of the hypothesis rather than produce a probability distribution on the space of labels.\\
	
	In the current work, and in line with most common practices, we will take the read-out hypothesis class $\mathcal{H}^*$ to be the class of hyperplane classifiers. We will adopt the following notations:
	%%%%	
	%%%% Notation: global and hyperplane classifiers:
	%%%%
	\begin{notation}
	\begin{enumerate}
	\item For $\omega \in \mathbb{R}^n$ and $b\in \mathbb{R}$, we denote by $h_{\omega,b}$ the hyperplane classifier with normal direction $\omega \in \mathbb{R}^n$ and shift $b$
		\[ h_{\omega,b}(y)=\mathrm{sign}(\inner{y}{\omega}+b)\in\{-1,+1\}, \quad \textrm{for all } y\in \mathbb{R}^n, \] 
	with the convention $\mathrm{sign}(0)=1$.
	\item If $u\in \mathbb{R}^{n\times r}$, $H_{u,\omega,b}$ denotes the global classifier of the stochastic RNN with pre-processing map $u$
	\[H_{u,\omega,b}(x)=h_{\omega,b}(y_u(T,x)), \quad \textrm{for all } x \in \mathcal{X}. \]
	\end{enumerate}
	\end{notation}
	%%%%%%%%%%%
	%%
	% Loss function and associated risk
	%%
	%%%%%%%%%%%
	\subsection{Loss functions and associated risk}\label{subsec:LossFct}
The goal of supervised learning is to maximise the ability, with high probability and measured against a loss function, of the predicted outputs to generalise to unseen data. As our learner is generating a random variable instead of a deterministic label, we need to consider loss functions of the type $l: \mathcal{V}^{\Omega} \times  \mathcal{V} \to \mathbb{R}_+$. Given a classical loss function $\tilde{l}: \mathcal{V}\times  \mathcal{V} \to \mathbb{R}_+$ (e.g. the square loss or the binary loss), we may construct  loss functions suitable to our framework in two manners amongst others. The first way is by defining the loss function $l$ to be a statistic of the random variable $\omega\mapsto \tilde{l}(\tilde{v}(\omega),v)$, for example
	\[\begin{array}{cccl}
	l:& \mathcal{V}^{\Omega} \times  \mathcal{V} & \longrightarrow & \mathbb{R}_+,\\
		& (\tilde{v},v)					& \longmapsto	& \mathbb{E}\tilde{l} (\tilde{v},v).\\
	\end{array}
	\]
	For a fixed input and label, the sole source of randomness in our example is the $d$-dimensional Brownian motion $B$ with respect to which we will take the expectation. An explicit example would be the binary loss function $\tilde{l} (\tilde{v},v)=\mathbf{1}_{\tilde{v}\neq v}$ to which we associate the loss $l(\tilde{v},v)=\mathbb{P}(\tilde{v}\neq v)$. This is the loss function that we will consider in our case. We choose this loss function as it is simpler to analyse than other popular types of losses (square loss, hinge loss, etc.) while involving similar key quantities (the Gaussian cumulative distribution function as will be seen later) in the classification process and risk minimisation.\\
	 A second type of loss function can be obtained by defining a statistical functional $\Psi: \mathcal{V}^{\Omega} \to \mathcal{V}$ (here $\mathcal{V}$ can be understood in a broader sense, for example $\mathbb{R}$, instead of the labels $\{-1,+1\}$) then define $l(\tilde{v},v)=\tilde{l}(\Psi(\tilde{v}), v)$. This type of loss functions depends on the distribution produced by the hypothesis rather than on its single realisations and defines therefore a probabilistic loss function in the sense of \cite{GKMO}. An instance of such type would be to take $\Psi=\mathbb{E}$ and the square loss $\tilde{l} (\tilde{v},v)=|\tilde{v}-v|^2$ to obtain $l(\tilde{v},v)=|\mathbb{E}\tilde{v}- v|^2$.
	
	The generalisation error (also called risk) of a hypothesis $H\in \mathcal{H}$ associated to a loss function $l$ is then classically given by $R(H)=\mathbb{E}_{(x,v)}l(H(x),v)$, where the expectation is taken with respect to the (unknown) distribution according to which an input and its label $(x,v)$ are generated. One usually interprets the generalisation error as the ability of the learned hypothesis to generalise well for unseen data, assuming it is sampled according to the same unknown distribution which generated the training sample. As no assumptions are made in regard to the distribution of the data, the generalisation error remains unknown and thus its minimisation as such impossible. Instead, one classically aims to minimise its empirical counterpart based on a training sample $(X,V)=\{(x_i,v_i)\}_{i=1}^m$
	\[\widehat{R}_{(X,V)}(H)=\frac{1}{m} \sum_{i=1}^ml(H(x_i),v_i).\]
	Obviously, one also needs to provide theoretical arguments justifying the use of the empirical error instead of the generalised one. These often come in the form of a concentration inequality. We will recall such an argument using the notion of Rademacher complexity in a subsequent subsection.\\
	
	If we denote by $\overline{\mathcal{H}}$ the set of all measurable hypotheses, we can then decompose the difference between the true risk of a hypothesis $H\in \mathcal{H}$ and the smallest possible risk in the following manner
	\[R(H)-\inf_{H_0\in \overline{\mathcal{H}}}R(H_0)=
	\underbrace{R(H)-\inf_{H_0\in \mathcal{H}}R(H_0)}_{E_{\mathrm{est}}}  +
	\underbrace{\inf_{H_0\in \mathcal{H}}R(H_0)-\inf_{H_0\in \overline{\mathcal{H}}}R(H_0)}_{E_{\mathrm{app}}}.\]
	On the one hand, the estimation error $E_{\mathrm{est}}$ depends on the ability to solve the problem of minimising the risk over the chosen hypotheses class $\mathcal{H}$. Intuitively, this problem gets harder with larger classes of hypotheses. In many situations where the concentration inequalities mentioned above apply, one obtains quantitative guarantees on how small the estimation error $E_{\mathrm{est}}$ can be made in the case of an empirical risk minimiser 
	\begin{equation}\label{eq:ERMdef}
	H^{\mathrm{ERM}}\in 
	\underset{H\in \mathcal{H}}{\textrm{argmin }} \widehat{R}_{(X,V)}(H).
	\end{equation}
	On the other hand, the approximation error $E_{\mathrm{app}}$ depends on how accurately the hypotheses in $\mathcal{H}$ can approximate any measurable hypothesis. 
	%%%%%%%%%%%%%%%%%%%%%%%%%%%%%%%%%%%%%%%%%%%%%%%%%%%%%%%
	%%%%%%%%%%%%%%%%%%%%%%%%%%%%%%%%%%%%%%%%%%%%%%%%%%%%%%%%
	%%
	% Generalisation error bounds
	%%
	%%%%%%%%%%%%%%%%%%%%%%%%%%%%%%%%%%%%%%%%%%%%%%%%%%%%%%%%
	%%%%%%%%%%%%%%%%%%%%%%%%%%%%%%%%%%%%%%%%%%%%%%%%%%%%%%%%
	\section{Generalisation error bounds}\label{Sec:GenErrorBound}
	%%%%%%%%%%%
	%%
	% Solution to the SDE and general observations
	%%
	%%%%%%%%%%%
	\subsection{Solution to the SDE}\label{sub:GenObs}
	In our simplified case of the identity function as an activation function, and similarly to an Ornstein-Uhlenbeck process, the solution to the SDE (\ref{eq:MainSDE}) is explicitly given by
	\[y(t)= \int_0^t e^{(W-I)(t-s)}u(x(s))\mathrm{d}s+\int_0^t e^{(W-I)(t-s)}\Sigma\mathrm{d}B(s),
	\quad t\leq T.
	\]
	\begin{notation}
	For convenience, we will write $W_0=W-I$.
	\end{notation}
	The final hidden state $y(T,x)$ is then a Gaussian random vector with mean and covariance matrix given by
	\[y(T,x)\sim \mathcal{N}\left(\int_0^T e^{W_0(T-s)}u(x(s))\mathrm{d}s, \int_0^T e^{W_0(T-s)}\Sigma \Sigma^\top e^{W_0^\top (T-s)} \mathrm{d}s \right).\]
	%%%%	
	%%%% NOTATION: Mean and variance
	%%%%
	\begin{notation}
	For the rest of this paper, we will denote
	\[\nu_{x,u}=\int_0^T e^{W_0(T-s)}u(x(s))\mathrm{d}s\in \mathbb{R}^{n} \quad \textrm{and} 
	\quad A= \int_0^T e^{W_0(T-s)}\Sigma \Sigma^\top e^{W_0^\top (T-s)} \mathrm{d}s \in \mathbb{R}^{n\times n}.\]
	\end{notation}
	%%%%	
	%%%% REM: Covariance independent from parameters:
	%%%%
	\begin{rem} Note that the covariance matrix $A$ is independent from the tunable parameters $u$, $\omega$ and $b$ and the input signals. As will be seen later, this is a key property of this algorithm.
	\end{rem}
	It is also interesting to note that in this linear case, the hidden states of different inputs have similar distributions (Gaussian) differing only by their means. Loosely speaking, the role of the parameter $u$ will be then to separate as much as possible the data $x_i$ according to their labels $v_i$'s through their processed weighted averages $\nu_{x_i,u}$, while the covariance matrix $A$ quantifies by how much the hidden states are likely to deviate from their means. As in \cite{NKDGRH}, one may be tempted to apply traditional separating algorithms such as soft SVM as a classification technique. However, we follow here a strategy dictated by the risk minimisation and the learning guarantees given by the generalisation error bounds that we will obtain. We will compare these two techniques in a subsequent subsection.	
	%%%%%%%%%%%
	%%
	% Empirical risk for the binary loss
	%%
	%%%%%%%%%%%
	\subsection{Empirical risk for the binary loss}
	As stated in Subsection \ref{subsec:LossFct}, we consider the case of the binary loss function. We will denote by $\tilde{l}$ the classical binary loss function $(\tilde{v},v)\mapsto \mathbf{1}_{\tilde{v} \neq v}$, while $l$ denotes its counterpart associated with random labels (assuming measurability)
	\[l: (\tilde{v},v)\longmapsto \mathbb{E}\tilde{l}(\tilde{v},v)= \mathbb{P}\left(\tilde{v} \neq v \right).\]
	We will give first the exact formula of the loss in the pure stochastic regime, which shows the smoothing effect of the noise on the binary loss function. For a lighter notation, we will sometimes avoid making the dependence on other variables explicit. For example, we may write $y$ instead of $y_u(T,x)$.  
	%%%%	
	%%%% PROP: ERisk formula
	%%%%
	\begin{Prop}\label{Prop:EmpRiskForm} Let $(x,v)\in\mathcal{X}\times \{-1,1\}$, $u\in \mathbb{R}^{n\times r}$, $\omega \in \mathbb{R}^n$ and $b\in \mathbb{R}$. If $\omega \notin \textrm{Ker}(A)$, then
	\begin{equation}\label{def-loss}
	l(H_{u,\omega,b}(x),v)=
	\Phi\left(-v\cdot\frac{\inner{\nu_{x,u}}{\omega}+b}{\sqrt{\omega^\top A\omega}}\right),
	\end{equation}
	where $\Phi$ denotes the standard Gaussian cumulative distribution function (CDF).
	\end{Prop}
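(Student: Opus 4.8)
The plan is to unwind the definitions and reduce everything to a one-dimensional Gaussian tail probability. By definition of the binary loss $l$ and of the global classifier $H_{u,\omega,b}$,
\[
l(H_{u,\omega,b}(x),v) = \mathbb{P}\big(h_{\omega,b}(y_u(T,x)) \neq v\big) = \mathbb{P}\big(\mathrm{sign}(\inner{y_u(T,x)}{\omega} + b) \neq v\big).
\]
Setting $Z = \inner{y_u(T,x)}{\omega} + b$ and using the fact, recalled in Subsection \ref{sub:GenObs}, that $y_u(T,x) \sim \mathcal{N}(\nu_{x,u}, A)$, the random variable $Z$ is a scalar Gaussian with mean $\mu := \inner{\nu_{x,u}}{\omega} + b$ and variance $\sigma^2 := \omega^\top A\omega$.

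The first step is to verify that $\sigma^2 > 0$, which is exactly where the hypothesis $\omega \notin \mathrm{Ker}(A)$ is used. Writing $M(s) = e^{W_0(T-s)}\Sigma$, one has $A = \int_0^T M(s)M(s)^\top\,\mathrm{d}s$, so $A$ is symmetric positive semi-definite and $\omega^\top A\omega = \int_0^T |M(s)^\top\omega|^2\,\mathrm{d}s \geq 0$. For a symmetric PSD matrix, $\omega^\top A\omega = 0$ is equivalent to $A\omega = 0$ (factor $A = B^\top B$; then $|B\omega|^2 = 0$ forces $B\omega = 0$ and hence $A\omega = 0$), so $\omega \notin \mathrm{Ker}(A)$ indeed gives $\sigma^2 > 0$. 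In particular $Z$ is non-degenerate, so $\mathbb{P}(Z = 0) = 0$ and the convention $\mathrm{sign}(0) = 1$ is immaterial.

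The second step is a case split on $v \in \{-1,+1\}$. If $v = 1$, then up to the null event $\{Z = 0\}$ we have $\{\mathrm{sign}(Z) \neq 1\} = \{Z < 0\}$, whence $l = \mathbb{P}(Z < 0) = \mathbb{P}\big(\tfrac{Z-\mu}{\sigma} < -\tfrac{\mu}{\sigma}\big) = \Phi(-\mu/\sigma)$, since $(Z-\mu)/\sigma$ is standard normal. If $v = -1$, then similarly $\{\mathrm{sign}(Z) \neq -1\} = \{Z > 0\}$ up to a null event, and $\mathbb{P}(Z > 0) = 1 - \Phi(-\mu/\sigma) = \Phi(\mu/\sigma)$ by the symmetry of $\Phi$. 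In both cases the result equals $\Phi(-v\mu/\sigma)$, which is precisely
\[
\Phi\!\left(-v\cdot\frac{\inner{\nu_{x,u}}{\omega}+b}{\sqrt{\omega^\top A\omega}}\right),
\]
the claimed identity.

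There is no genuine obstacle in this argument; it is essentially a computation. The only two points deserving attention are the equivalence $\omega \notin \mathrm{Ker}(A) \Longleftrightarrow \omega^\top A\omega > 0$, which relies on positive semi-definiteness of $A$ rather than mere symmetry, and the bookkeeping of the $\mathrm{sign}(0)=1$ convention, which is harmless exactly because non-degeneracy makes $\{Z = 0\}$ a $\mathbb{P}$-null set. I would also note in passing that the same representation shows that $Z$, and hence $H_{u,\omega,b}(x)$, is a measurable function of the Brownian motion $B$, so the measurability implicitly required to make sense of $l(H_{u,\omega,b}(x),v) = \mathbb{E}\tilde l(H_{u,\omega,b}(x),v)$ holds automatically.
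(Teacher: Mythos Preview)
Your proof is correct and follows essentially the same approach as the paper: identify $Z=\inner{y_u(T,x)}{\omega}+b$ as a non-degenerate scalar Gaussian (using $\omega\notin\mathrm{Ker}(A)$), observe that the convention $\mathrm{sign}(0)=1$ is immaterial since $\mathbb{P}(Z=0)=0$, and reduce to a Gaussian tail probability. The only cosmetic difference is that the paper handles both labels at once via $\mathbb{P}(vZ\leq 0)$ rather than your explicit case split on $v\in\{-1,+1\}$, and it does not spell out the PSD argument for $\omega^\top A\omega>0$ as carefully as you do.
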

	\begin{proof}
	First note that
	\[\tilde{l}(H_{u,\omega,b}(x),v)=
	\mathbf{1}_{\mathrm{sign} (\inner{y}{\omega}+b)\neq v}
	 \leq \mathbf{1}_{v(\inner{y}{\omega}+b)\leq 0}\] 
	and that this inequality is not an identity if and only if $v=1$ and $\inner{y}{\omega}+b=0$ (because of the convention on the sign of $0$). Next, recall that $\inner{y}{\omega}+b$ is a Gaussian random variable
	\[\inner{y}{\omega}+b\sim \mathcal{N}\left(\inner{\nu_{x,u}}{\omega}+b , \omega^\top A\omega \right).\]
	We assume that $\omega \notin \textrm{Ker}(A)$. Then $\inner{y}{\omega}+b$ is a non-trivial Gaussian random variable (and $\mathbb{P}\left(\inner{y}{\omega}+b=0\right)=0$.) Consequently
	\[l(H_{u,\omega,b}(x),v)
	= \mathbb{E}_{B}\tilde{l}(H_{u,\omega,b}(x),v)
	= \mathbb{P}(v(\inner{y}{\omega}+b)\leq 0)
	=\Phi\left(-v\cdot\frac{\inner{\nu_{x,u}}{\omega}+b}{\sqrt{\omega^\top A\omega}}\right),\]
	which completes the proof.
	\end{proof}
	%%%%	
	%%%% REM: ERisk resembles distance
	%%%%
	\begin{rem} Note that if $\omega \notin \textrm{Ker}(A)$, then one cannot obtain a null empirical risk because of the Gaussian CDF. The quantity
	\[\frac{\inner{\nu_{x,u}}{\omega}+b}{\sqrt{\omega^\top A\omega}}\]
	resembles a distance from a hyperplane (more details will be provided in the next section). Similar in spirit to the soft-SVM problem, the loss function \eqref{def-loss} penalises both misclassification and close proximity to said hyperplane. \end{rem}
	%%%%	
	%%%% REM: Non-stochastic ERisk
	%%%%
	\begin{rem} If $\omega \in \textrm{Ker}(A)$, then $\inner{y}{\omega}=\inner{\nu_{x,u}}{\omega}$. We are then in a non-stochastic regime where full accuracy on the training set can be achieved if the averages $\nu_{x_i,u}$ are separable since then
	\[l(H_{u,\omega,b}(x),v)=\mathbf{1}_{\mathrm{sign}(\inner{\nu_{x,u}}{\omega}+b)\neq v}.\]
	In this regime, only misclassification is penalised.
	\end{rem}
%	%%%%	
%	%%%% REM: PAC-Bayes Loss
%	%%%%
%	\begin{rem} Note that the expression of the loss that we obtained in Proposition \ref{Prop:EmpRiskForm} is the same (in the case $A=\mathrm{I}_n$) as the one obtained by considering a Bayesian-type loss with multivariate normal prior and posterior for the classification of the terminal hidden states $\nu_{x,u}$ of the noiseless RNN (c.f. \cite{Langford, GLLM}).
	%\end{rem}
	%%%%%%%%%%%
	%%
	% Main result
	%%
	%%%%%%%%%%%
	\subsection{Main result}
	Classically, quantitative guarantees for the minimisation of the estimation error within a chosen hypothesis class (agnostic PAC-learnability) is obtained by first showing that the hypothesis set satisfies the uniform convergence property (c.f. \cite{SB}), i.e., by obtaining probabilistic bounds for the worst-in-class difference between the generalisation error and the empirical error
	\[\mathbb{P}\left(
	\sup_{H\in\mathcal{H}}\left|R(H)-\widehat{R}_{(X,V)}(H)\right|
	>\varepsilon\right).\]
	In turn, a way to achieve this is through controlling the Rademacher complexity (or the growth function or the VC-dimension) of the hypothesis class. Given a state space $\mathcal{Z}$, a sample $Z=\{z_i\}_{i=1}^m$ of points in $\mathcal{Z}$ and a class $\mathcal{G}$ of real valued maps defined on $\mathcal{Z}$, the (empirical) Rademacher complexity of $\mathcal{G}$ with respect to the sample $Z$ is defined by (c.f. \cite{MRT})
	\[\mathcal{R}_{Z}(\mathcal{G})=\frac{1}{m} \mathbb{E}_{\varepsilon}\left(\sup_{g\in \mathcal{G}} \sum_{i=1}^m \varepsilon_i g(z_i)\right),\]
	where the $\varepsilon_i$'s are independent Rademacher (i.e. symmetric Bernoulli) random variables and $\varepsilon=(\varepsilon_1,\ldots, \varepsilon_m)$\footnote{We assume that the random quantity inside the expectation is indeed measurable. This happens to be the case for most of machine learning applications including the one discussed in this paper.}. The Rademacher complexity can be seen as a measure of the richness of the class of functions $\mathcal{G}$ and its ability to provide a variety of labels $\{g(z_i)\}_{i=1}^m$ for the sample $S$. If we assume that the sample $Z$ is drawn in an i.i.d. manner according to a distribution $\mathcal{D}$, we obtain the following concentration inequality
	%%%%	
	%%%% THM: Classical Rad. ineq.
	%%%%
	\begin{theo}\label{theo:RadIneqOrig}\cite{BBL, KP, MRT} Let $\mathcal{G}$ be a family of real-valued maps defined over a sample space $\mathcal{Z}$ with values in the interval $[0,1]$. Let $\mathcal{D}$ be a distribution over $\mathcal{Z}$ and $Z=\{z_i\}_{i=1}^m\sim  \mathcal{D}^m$ be a random sample. Denote by $z$ a random variable distributed according to $\mathcal{D}$. Let $\delta>0$. Then the following holds with probability at least $1-\delta$
	\[\sup_{g\in\mathcal{G}}
	\left|\mathbb{E}g(z)-\frac{1}{m}\sum_{i=1}^mg(z_i)\right| 
	\leq 
	4 \mathcal{R}_{Z}(\mathcal{G})
	+\frac{2+5\sqrt{\log(2/\delta)/2}}{\sqrt{m}}.\]
	\end{theo}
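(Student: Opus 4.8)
The plan is to follow the classical route for results of this type (see \cite{BBL, KP, MRT}), combining the bounded-differences (McDiarmid) inequality with a symmetrization argument against a ghost sample.

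First I would introduce the functional $\Psi(Z)=\sup_{g\in\mathcal{G}}\left|\mathbb{E}g(z)-\frac{1}{m}\sum_{i=1}^m g(z_i)\right|$. Since every $g$ takes values in $[0,1]$, replacing a single coordinate $z_i$ by another point of $\mathcal{Z}$ alters the empirical average $\frac{1}{m}\sum_i g(z_i)$ by at most $1/m$, uniformly in $g$, so $\Psi$ has bounded differences $1/m$. McDiarmid's inequality then gives that with probability at least $1-\delta/2$,
\[\Psi(Z)\le \mathbb{E}_Z\Psi(Z)+\sqrt{\frac{\log(2/\delta)}{2m}}.\]

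The core of the argument is to bound $\mathbb{E}_Z\Psi(Z)$ by the expected Rademacher complexity. Introducing an independent ghost sample $Z'=\{z_i'\}_{i=1}^m$ with the same law as $Z$ and writing $\mathbb{E}g(z)=\mathbb{E}_{Z'}\frac{1}{m}\sum_i g(z_i')$, Jensen's inequality pulls the expectation over $Z'$ outside the supremum; then, since the pairs $(z_i,z_i')$ and $(z_i',z_i)$ are exchangeable, one may insert i.i.d. Rademacher signs $\varepsilon_i$ in front of $g(z_i')-g(z_i)$ without changing the distribution, and a termwise triangle inequality splits the resulting quantity into two copies of $\mathbb{E}_{Z,\varepsilon}\frac{1}{m}\sup_{g}\left|\sum_i \varepsilon_i g(z_i)\right|$. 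The one point that needs care is that the empirical Rademacher complexity in the excerpt carries \emph{no} absolute value inside the supremum; to remove it I would fix a reference $g_0\in\mathcal{G}$, use $\left|\sum_i\varepsilon_i g(z_i)\right|\le\left|\sum_i\varepsilon_i g_0(z_i)\right|+\sup_g\left|\sum_i\varepsilon_i(g-g_0)(z_i)\right|$, bound the last supremum by $\sup_g\sum_i\varepsilon_i g(z_i)+\sup_g\sum_i(-\varepsilon_i)g(z_i)$ (both suprema are nonnegative because $g_0\in\mathcal{G}$), and control $\mathbb{E}_\varepsilon\left|\sum_i\varepsilon_i g_0(z_i)\right|\le\big(\sum_i g_0(z_i)^2\big)^{1/2}\le\sqrt{m}$ by Jensen. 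Using that $-\varepsilon$ has the same law as $\varepsilon$ to identify the two suprema in expectation, this yields $\mathbb{E}_Z\Psi(Z)\le 4\,\mathbb{E}_Z\mathcal{R}_Z(\mathcal{G})+\frac{2}{\sqrt{m}}$.

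Finally, the map $Z\mapsto\mathcal{R}_Z(\mathcal{G})$ also has bounded differences $1/m$ (again because $g\in[0,1]$), so a second application of McDiarmid gives $\mathbb{E}_Z\mathcal{R}_Z(\mathcal{G})\le\mathcal{R}_Z(\mathcal{G})+\sqrt{\log(2/\delta)/(2m)}$ with probability at least $1-\delta/2$. A union bound over the two exceptional events of probability $\delta/2$, followed by collecting the $1/\sqrt{m}$-order terms (the $2/\sqrt{m}$ from symmetrization together with $4$ and $1$ copies of $\sqrt{\log(2/\delta)/(2m)}$), produces exactly $4\mathcal{R}_Z(\mathcal{G})+\big(2+5\sqrt{\log(2/\delta)/2}\big)/\sqrt{m}$. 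I expect the only genuinely delicate part to be the absolute-value bookkeeping in the symmetrization step that is responsible for the additive $2/\sqrt{m}$; the rest is routine once McDiarmid's inequality is in hand.
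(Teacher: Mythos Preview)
Your proof is correct and follows exactly the standard route from the cited references \cite{BBL, KP, MRT}: McDiarmid on the uniform deviation, symmetrization with a ghost sample, the centering trick with a fixed $g_0\in\mathcal{G}$ to pass from $\sup_g|\cdot|$ to the one-sided Rademacher complexity (picking up the additive $2/\sqrt{m}$), a second McDiarmid on $Z\mapsto\mathcal{R}_Z(\mathcal{G})$, and a union bound. The constants are tracked correctly and assemble to $4\mathcal{R}_Z(\mathcal{G})+(2+5\sqrt{\log(2/\delta)/2})/\sqrt{m}$.

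Note, however, that the paper does \emph{not} give its own proof of this statement: Theorem~\ref{theo:RadIneqOrig} is quoted from the literature and used as a black box in the proof of Theorem~\ref{thm:GenErrSto}. So there is nothing in the paper to compare your argument against beyond the references themselves, which your sketch faithfully reproduces.
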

	We will estimate the Rademacher complexity in our setting then apply the above theorem to quantify the error in estimating the true risk by its empirical counterpart. This will yield our main theoretical result below. For matrices, $\|\cdot\|$ denotes the spectral norm.
	%%%%	
	%%%% THM: Gen error bound/Rademacher/stochastic case
	%%%%
	\begin{theo}\label{thm:GenErrSto} Let $\Theta$ and $\Lambda$ be two positive real numbers. We consider the family of hypotheses given by
	\[\mathcal{H}_{\Theta, \Lambda}=\{H_{u,\omega,b}:\;\; \|\omega\|_2=1, |b|\leq \Theta, \|u\|\leq \Lambda\}.\]
	Let $\delta>0$, $R>0$ and $m\in \mathbb{N}^*$. We assume that the input signals lie almost surely in the $L^2$-ball of radius $R$
	\[\mathcal{B}_R=\left\{x\in \mathcal{C}\left([0,T],\mathbb{R}^r\right):\;\; \left(\int_0^T\left\| x_s\right\|_2^2\mathrm{d}s\right)^{1/2} \leq R\right\},\]
	and that the covariance matrix $A$ is positive definite. We denote by $\lambda_{\min}(A)$ its smallest eigenvalue. Let $\mathcal{D}$ be a distribution over $\mathcal{B}_R\times \{-1,1\}$ and $(X,V)=\{(x_i,v_i)\}_{i=1}^m\sim  \mathcal{D}^m$ be a random sample. Then, with probability at least $1-\delta$
	\[\begin{array}{l}
	\sup_{H\in\mathcal{H}_{\Theta, \Lambda}}\left|R(H)-\widehat{R}_{(X,V)}(H)\right| \\
	\leq 
	\frac{4}{\sqrt{2\pi m \lambda_{\min}(A)}} 
	\left(\Theta + 
	\Lambda R\left(\int_0^t\left\|e^{W_0(t-s)}\right\|^2\mathrm{d}s\right)^{1/2} \right)
	 +\frac{2+5\sqrt{\log(2/\delta)/2}}{\sqrt{m}}.\\
	\end{array}
	\]
	\end{theo}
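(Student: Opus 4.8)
The plan is to apply the Rademacher concentration bound of Theorem~\ref{theo:RadIneqOrig} to the loss class $\mathcal{G}_{\Theta,\Lambda}=\{(x,v)\mapsto l(H_{u,\omega,b}(x),v):H_{u,\omega,b}\in\mathcal{H}_{\Theta,\Lambda}\}$. Since $l$ takes values in $[0,1]$ (it is a probability) and since $R(H)=\mathbb{E}_{(x,v)}l(H(x),v)$, $\widehat{R}_{(X,V)}(H)=\frac1m\sum_{i=1}^m l(H(x_i),v_i)$, that theorem reduces everything to estimating the empirical Rademacher complexity $\mathcal{R}_{(X,V)}(\mathcal{G}_{\Theta,\Lambda})$; I claim it is at most $\frac{1}{\sqrt{2\pi m\lambda_{\min}(A)}}\bigl(\Theta+\Lambda R(\int_0^T\|e^{W_0(T-s)}\|^2\,\mathrm{d}s)^{1/2}\bigr)$, and plugging this into Theorem~\ref{theo:RadIneqOrig} yields the stated bound, the constant $4$ there producing the $4$ in front.

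First I would invoke Proposition~\ref{Prop:EmpRiskForm}: since $A$ is positive definite, no unit vector $\omega$ lies in $\mathrm{Ker}(A)$, so every member of $\mathcal{G}_{\Theta,\Lambda}$ equals $(x,v)\mapsto\Phi\bigl(-v\,t_{u,\omega,b}(x)\bigr)$ with $t_{u,\omega,b}(x)=\frac{\inner{\nu_{x,u}}{\omega}+b}{\sqrt{\omega^\top A\omega}}$. As $\Phi$ is $\frac{1}{\sqrt{2\pi}}$-Lipschitz and each $v_i\in\{-1,1\}$, the maps $s\mapsto\Phi(-v_i s)-\tfrac12$ are $\frac{1}{\sqrt{2\pi}}$-Lipschitz and vanish at $0$, so the Ledoux--Talagrand contraction lemma (in its index-dependent form; the additive $\tfrac12$ drops out since $\mathbb{E}_\varepsilon\sum_i\varepsilon_i=0$) gives $\mathcal{R}_{(X,V)}(\mathcal{G}_{\Theta,\Lambda})\le\frac{1}{\sqrt{2\pi}}\,\mathcal{R}_{(X,V)}(\mathcal{T})$, where $\mathcal{T}$ is the class of the functionals $x\mapsto t_{u,\omega,b}(x)$ over $\|\omega\|_2=1$, $|b|\le\Theta$, $\|u\|\le\Lambda$.

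Next I would remove the normalisation $\sqrt{\omega^\top A\omega}$ that couples $\omega$ and $b$ nonlinearly. Writing $\tilde\omega=\omega/\sqrt{\omega^\top A\omega}$ and $\tilde b=b/\sqrt{\omega^\top A\omega}$ and using $\omega^\top A\omega\ge\lambda_{\min}(A)\|\omega\|_2^2=\lambda_{\min}(A)$, the class $\mathcal{T}$ embeds into $\{x\mapsto\inner{\nu_{x,u}}{\tilde\omega}+\tilde b:\ \|u\|\le\Lambda,\ \|\tilde\omega\|_2\le\lambda_{\min}(A)^{-1/2},\ |\tilde b|\le\Theta\lambda_{\min}(A)^{-1/2}\}$, and subadditivity of the Rademacher complexity splits this into a bias part and a linear part. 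The bias part is $\le\frac{\Theta}{\sqrt{m\lambda_{\min}(A)}}$ via $\mathbb{E}_\varepsilon|\sum_i\varepsilon_i|\le\sqrt m$. For the linear part the key structural input is that $\nu_{x,u}=\int_0^T e^{W_0(T-s)}u(x(s))\,\mathrm{d}s$ is linear in $(u,x)$, so $\sum_i\varepsilon_i\nu_{x_i,u}=\int_0^T e^{W_0(T-s)}u\bigl(\sum_i\varepsilon_i x_i(s)\bigr)\mathrm{d}s$; then dualising over $\tilde\omega$, pulling out $\|u\|\le\Lambda$ and the operator norms $\|e^{W_0(T-s)}\|$, applying Jensen ($\mathbb{E}_\varepsilon\|\sum_i\varepsilon_i x_i(s)\|_2\le(\sum_i\|x_i(s)\|_2^2)^{1/2}$), Cauchy--Schwarz in $s$, and finally $\int_0^T\|x_i(s)\|_2^2\,\mathrm{d}s\le R^2$ (the $\mathcal{B}_R$ hypothesis) bounds the linear part by $\frac{\Lambda R}{\sqrt{m\lambda_{\min}(A)}}\bigl(\int_0^T\|e^{W_0(T-s)}\|^2\,\mathrm{d}s\bigr)^{1/2}$.

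The step I expect to be most delicate is precisely this last chain of inequalities for the linear part: one must track carefully which variable the Rademacher average, the expectation, and the two Jensen/Cauchy--Schwarz steps act on, so that the factor $\sqrt m$ produced inside the Rademacher average cancels exactly one power of $m$ from the normalisation $\frac1m$ and leaves the desired $m^{-1/2}$ rate; by contrast the contraction step and the $\sqrt{\omega^\top A\omega}\ge\sqrt{\lambda_{\min}(A)}$ reduction are routine. Adding the bias and linear contributions gives $\mathcal{R}_{(X,V)}(\mathcal{T})\le\frac{1}{\sqrt{m\lambda_{\min}(A)}}\bigl(\Theta+\Lambda R(\int_0^T\|e^{W_0(T-s)}\|^2\,\mathrm{d}s)^{1/2}\bigr)$, hence the claimed bound on $\mathcal{R}_{(X,V)}(\mathcal{G}_{\Theta,\Lambda})$ after the $\frac{1}{\sqrt{2\pi}}$ factor, and Theorem~\ref{theo:RadIneqOrig} closes the argument.
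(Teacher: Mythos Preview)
Your proposal is correct and follows essentially the same route as the paper: apply Theorem~\ref{theo:RadIneqOrig}, use Proposition~\ref{Prop:EmpRiskForm} together with the $\frac{1}{\sqrt{2\pi}}$-Lipschitz contraction for $\Phi$, bound $\sqrt{\omega^\top A\omega}\ge\sqrt{\lambda_{\min}(A)}$, split into the bias term and the $\nu_{x,u}$-term, and control the latter via the linearity $\sum_i\varepsilon_i\nu_{x_i,u}=\int_0^T e^{W_0(T-s)}u(\sum_i\varepsilon_i x_i(s))\,\mathrm{d}s$ combined with Cauchy--Schwarz in $s$, Jensen for the Rademacher expectation, and the $L^2$-ball hypothesis. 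The only cosmetic differences are that the paper absorbs the signs $v_i$ via $(\varepsilon_i)\sim(\varepsilon_i v_i)$ rather than invoking the index-dependent contraction with the centering $\Phi-\tfrac12$, and it bounds the quotient directly instead of reparametrising to $(\tilde\omega,\tilde b)$; neither changes the argument.
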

	\begin{proof} By applying Theorem \ref{theo:RadIneqOrig} for the set of functions
	\[(x,v)\longmapsto l(H(x),v),\quad H\in \mathcal{H}_{\Theta, \Lambda},\]
	together with the formula obtained in Proposition \ref{Prop:EmpRiskForm} (using the assumption on $A$ being positive definite), we get that the following holds with probability at least $1-\delta$,
	\[\sup_{H\in\mathcal{H}}\left|R(H)-\widehat{R}_{(X,V)}(H)\right| \leq 
	\frac{4}{m} \mathbb{E}_{\varepsilon}\sup_{u,\omega,b} \sum_{i=1}^m \varepsilon_i \Phi\left(-v_i\cdot\frac{\inner{\nu_{x_i,u}}{\omega}+b}{\sqrt{\omega^\top A\omega}}\right)
	 +\frac{2+5\sqrt{\log(2/\delta)/2}}{\sqrt{m}},\]
	with the supremum taken over the set 
	\[\{(u,\omega,b):\;\; \|\omega\|_2=1, |b|\leq \Theta, \|u\|\leq \Lambda\}.\]
	As $\Phi$ is Lipschitz with constant $\frac{1}{\sqrt{2\pi}}$, by Talagrand's inequality \cite{LT,MRT}
	\[\begin{array}{rcl}
	\mathbb{E}_{\varepsilon}\sup\limits_{u,\omega,b} \sum\limits_{i=1}^m \varepsilon_i \Phi\left(-v_i\cdot\frac{\inner{\nu_{x_i,u}}{\omega}+b}{\sqrt{\omega^\top A\omega}}\right) &\leq &
	\frac{1}{\sqrt{2\pi}}\mathbb{E}_{\varepsilon}\sup\limits_{\omega,b,u} \sum\limits_{i=1}^m  \left(-\varepsilon_iv_i\cdot\frac{\inner{\nu_{x_i,u}}{\omega}+b}{\sqrt{\omega^\top A\omega}}\right)\\
	&=& \frac{1}{\sqrt{2\pi}}\mathbb{E}_{\varepsilon}\sup\limits_{\omega,b,u} \sum\limits_{i=1}^m  \left(\varepsilon_i\cdot\frac{\inner{\nu_{x_i,u}}{\omega}+b}{\sqrt{\omega^\top A\omega}}\right),\\
	\end{array}
	\]
	where we used that $(\varepsilon_i)\sim (\varepsilon_i v_i)$ in the last line. On the one hand
	\[\mathbb{E}_{\varepsilon}\sup\limits_{u,\omega,b} \sum_{i=1}^m  \frac{\varepsilon_i b}{\sqrt{\omega^\top A\omega}}\leq 
	\frac{\Theta}{\sqrt{\lambda_{\min}(A)}} \mathbb{E}_{\varepsilon}\left|\sum_{i=1}^m  \varepsilon_i\right|
	\leq 
	\frac{\Theta}{\sqrt{\lambda_{\min}(A)}} \sqrt{\mathbb{E}_{\varepsilon}\left(\sum_{i=1}^m  \varepsilon_i\right)^2} 
	= \frac{\Theta\sqrt{m}}{\sqrt{\lambda_{\min}(A)}}.\]
	and on the other hand
	\[\mathbb{E}_{\varepsilon}\sup\limits_{u,\omega,b}  \frac{\inner{\sum_{1}^m \varepsilon_i\nu_{x_i,u}}{\omega}}{\sqrt{\omega^\top A\omega}}
	\leq \frac{1}{\sqrt{\lambda_{\min}(A)}}\mathbb{E}_{\varepsilon}\sup_{\|u\|\leq \Lambda}\left\|\sum_{i=1}^m \varepsilon_i\nu_{x_i,u}\right\|_2.
	\]
	Recall that
	\[\sum_{i=1}^m \varepsilon_i\nu_{x_i,u}=\int_0^T e^{W_0(T-s)}\sum_{i=1}^m \varepsilon_iu(x_i(s))\mathrm{d}s.\]
	We now use the following inequality for matrix valued maps $f:[0,T]\rightarrow \mathbb{R}^{n\times n}$ and $g:[0,T]\rightarrow \mathbb{R}^n$
	\[\left\|\int_0^Tf(s)g(s)\mathrm{d}s\right\|_2\leq 
	\left(\int_0^T\left\|f(s)\right\|^2\mathrm{d}s\right)^{1/2}
	\left(\int_0^T\left\|g(s)\right\|_2^2\mathrm{d}s\right)^{1/2},\]
	together with the linearity of $u$ and $\|u\| \leq \Lambda$, ensuring $\left\|\sum_{1}^m \varepsilon_iu(x_i(s))\right\| \leq \Lambda \left\|\sum_{1}^m \varepsilon_i x_i(s)\right\|$, to obtain 
	\[\begin{array}{lcl}
	\mathbb{E}_{\varepsilon}\sup\limits_{\|u\|\leq \Lambda}\left\|\sum_{1}^m \varepsilon_i\nu_{x_i,u}\right\|_2& \leq& 
	\left(\int_0^T\left\|e^{W_0(T-s)}\right\|^2\mathrm{d}s\right)^{1/2}
	\mathbb{E}_{\varepsilon}\sup\limits_{\|u\|\leq \Lambda} \left(\int_0^T\left\|\sum_{1}^m \varepsilon_iu(x_i(s))\right\|_2^2\mathrm{d}s\right)^{1/2}\\
	& \leq& 
	\Lambda\left(\int_0^T\left\|e^{W_0(T-s)}\right\|^2\mathrm{d}s\right)^{1/2}
	\mathbb{E}_{\varepsilon} \left(\int_0^T\left\|\sum_{1}^m \varepsilon_i x_i(s)\right\|_2^2\mathrm{d}s\right)^{1/2}.\\
	\end{array}
	 \]
	Using Jensen's and Fubini's inequalities, we obtain the bound
	\[
	\mathbb{E}_{\varepsilon} \left(\int_0^T\left\|\sum_{i=1}^m \varepsilon_i x_i(s)\right\|_2^2\mathrm{d}s\right)^{1/2}
	\leq
	 \left(\int_0^T\mathbb{E}_{\varepsilon}\left\|\sum_{i=1}^m \varepsilon_i x_i(s)\right\|_2^2\mathrm{d}s\right)^{1/2}
	 =
	 \left(\sum_{i=1}^m \int_0^T\left\| x_i(s)\right\|_2^2\mathrm{d}s\right)^{1/2}.
	 \]
	 Using that the input paths live in the set $\mathcal{B}_R$, we conclude that:
	\[\mathbb{E}_{\varepsilon}\sup_{u,\omega,b} \sum_{i=1}^m \varepsilon_i \Phi\left(-v_i\cdot\frac{\mu_{x_i,\omega}+b}{\sqrt{\omega^\top A\omega}}\right)\leq
	\sqrt{\frac{m}{2\pi \lambda_{\min}(A)}}\left(
	\Theta + 
	\Lambda R\left(\int_0^T\left\|e^{W_0(T-s)}\right\|^2\mathrm{d}s\right)^{1/2}
	\right),\]
	which gives the desired inequality.
	\end{proof}
	%%%%	
	%%%% Rem: VC dimension vs. Rademacher
	%%%%
	\begin{rem}\label{Rem:VCvsRademacher} The use of the Rademacher complexity allows us to obtain a generalisation error bound that decays like $\frac{1}{\sqrt{m}}$, which is the common rate of decay encountered in the classical supervised learning framework (with i.i.d entries). This comes, however, at the cost of the technical assumption of the inputs being uniformly bounded in the $L^2$-norm (which is arguably a realistic assumption.) Similarly to the error bounds for the SVM algorithm, it could be possible to relax this assumption at the cost of a much slower rate of decay by using the notion of VC-dimension. For example, given $m$ labelled points $(X,V)=\{(x_i,v_i)\}_{i=1}^m$ in $\mathbb{R}^n$, one gets with probability at least $1-\delta$ for all hyperplane classifiers (c.f. \cite{MRT})
	\begin{equation}\label{Eq:VCdimBoundSVM}
	R(H)\leq \widehat{R}_{(X,V)}(H) + 
	\sqrt{\frac{2(n+1)\log\left(\frac{em}{n+1} \right)}{m}}+ 
	\sqrt{\frac{\log\left(1/\delta \right)}{2m}}
	.
	\end{equation}
	In the discrete case, such bounds on the VC dimension of RNNs (in function of the number of weights in the network and the length of the sequence -which would correspond here to the number of steps one uses to discretise an input path) have been obtained for example by Koiran and Sontag in \cite{KS}.
	\end{rem}
	The previous theorem allows us to quantitatively control the estimation error:
	%%%%	
	%%%% Cor: PAC Agnostic/stochastic case
	%%%%
	\begin{Cor}\label{Cor:PACAgnosticRNN} Let $\Theta$, $\Lambda$ and $R$ be positive real numbers. We assume that the input signals lie almost surely in the $L^2$-ball of radius $R$ and that the covariance matrix $A$ is definite. Then the class $\mathcal{H}_{\Theta, \Lambda}$ is agnostically PAC learnable through its empirical risk minimiser hypothesis $H^{\textrm{ERM}}$ (defined in (\ref{eq:ERMdef})), i.e., there exists a function $\tilde{m}: (0,1)^2\to \mathbb{N}$ such that for all $\varepsilon,\delta \in (0,1)$, for every distribution $\mathcal{D}$ over $\mathcal{X}\times \{-1,1\}$ and every sample $(X,V)\sim \mathcal{D}^m$ of size $m\geq \tilde{m}(\varepsilon,\delta)$, we have with probability at least $1-\delta$
	\[R(H^{\textrm{ERM}})\leq \inf_{H\in \mathcal{H}_{\Theta, \Lambda}} R(H)+\varepsilon.\]
	More explicitly, one may take $\tilde{m}(\varepsilon,\delta)$ to be an integer $m$ (ideally the smallest one) such that
	\[m\geq 
	\frac{4}{\varepsilon^2}\left(
	\frac{4 \left(\Theta + 
	\Lambda R\left(\int_0^T\left\|e^{W_0(T-s)}\right\|^2\mathrm{d}s\right)^{1/2} \right)}{\sqrt{2\pi \lambda_{\min}(A)}} 
	 +2+5\sqrt{\log(2/\delta)}\right)^2
	.\]
	\end{Cor}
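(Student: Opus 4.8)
The plan is to combine the uniform convergence estimate of Theorem \ref{thm:GenErrSto} with the classical empirical-risk-minimisation comparison, and then solve the resulting inequality for the sample size $m$. First I would observe that, under the standing assumptions of the corollary, the distribution $\mathcal{D}$ is supported on $\mathcal{B}_R\times\{-1,1\}$, so Theorem \ref{thm:GenErrSto} applies verbatim: for every $m\in\mathbb{N}^*$ and $\delta\in(0,1)$, with probability at least $1-\delta$ over $(X,V)\sim\mathcal{D}^m$ one has $\sup_{H\in\mathcal{H}_{\Theta,\Lambda}}|R(H)-\widehat{R}_{(X,V)}(H)|\leq K(\delta)/\sqrt{m}$, where
\[
K(\delta)=\frac{4}{\sqrt{2\pi\,\lambda_{\min}(A)}}\left(\Theta+\Lambda R\left(\int_0^T\left\|e^{W_0(T-s)}\right\|^2\mathrm{d}s\right)^{1/2}\right)+2+5\sqrt{\log(2/\delta)/2}
\]
is obtained simply by factoring $1/\sqrt{m}$ out of the bound in that theorem.

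Next I would run the standard ERM argument on the event where this uniform bound holds. For an arbitrary $H\in\mathcal{H}_{\Theta,\Lambda}$, chaining the uniform bound, the defining inequality $\widehat{R}_{(X,V)}(H^{\mathrm{ERM}})\leq\widehat{R}_{(X,V)}(H)$ of \eqref{eq:ERMdef}, and the uniform bound once more yields $R(H^{\mathrm{ERM}})\leq R(H)+2K(\delta)/\sqrt{m}$. Since this is valid for every such $H$, I may pass to the infimum over $H\in\mathcal{H}_{\Theta,\Lambda}$ on the right-hand side -- no attainment of the infimum is required -- to get $R(H^{\mathrm{ERM}})\leq\inf_{H\in\mathcal{H}_{\Theta,\Lambda}}R(H)+2K(\delta)/\sqrt{m}$ with probability at least $1-\delta$. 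It then suffices to pick $m$ with $2K(\delta)/\sqrt{m}\leq\varepsilon$, i.e.\ $m\geq 4K(\delta)^2/\varepsilon^2$; bounding $\sqrt{\log(2/\delta)/2}\leq\sqrt{\log(2/\delta)}$ replaces $K(\delta)$ by exactly the quantity appearing inside the square in the statement, so the displayed $\tilde{m}(\varepsilon,\delta)$ works and one reads off $\tilde{m}$ explicitly.

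I do not expect any genuine obstacle here: once Theorem \ref{thm:GenErrSto} is in hand this is bookkeeping. The only points requiring a little care are the factor $2$ that is incurred when converting a two-sided control of $\sup_H|R(H)-\widehat{R}_{(X,V)}(H)|$ into a one-sided control of $R(H^{\mathrm{ERM}})-\inf_H R(H)$, and the mild slack $\sqrt{\log(2/\delta)/2}\leq\sqrt{\log(2/\delta)}$ used only to present $\tilde{m}(\varepsilon,\delta)$ in a clean closed form (keeping the sharper constant is possible but gives an uglier expression). Existence and measurability of $H^{\mathrm{ERM}}$ are taken for granted, as in the definition \eqref{eq:ERMdef}.
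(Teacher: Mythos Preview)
Your proposal is correct and follows essentially the same route as the paper: apply Theorem~\ref{thm:GenErrSto}, run the standard three-term ERM decomposition $R(H^{\mathrm{ERM}})-R(H)=\bigl(R(H^{\mathrm{ERM}})-\widehat{R}(H^{\mathrm{ERM}})\bigr)+\bigl(\widehat{R}(H^{\mathrm{ERM}})-\widehat{R}(H)\bigr)+\bigl(\widehat{R}(H)-R(H)\bigr)$, and then solve for $m$. Your explicit remark about the harmless slack $\sqrt{\log(2/\delta)/2}\leq\sqrt{\log(2/\delta)}$ is a point the paper uses silently when passing from the bound of Theorem~\ref{thm:GenErrSto} to the displayed expression for $\tilde{m}(\varepsilon,\delta)$.
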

	\begin{proof}
	Let $\varepsilon,\delta \in (0,1)$. Let $\tilde{m}(\varepsilon,\delta)$ be the smallest integer $m$ such that
	\[
	\frac{4}{\sqrt{2\pi m \lambda_{\min}(A)}} 
	\left(\Theta + 
	\Lambda R\left(\int_0^T\left\|e^{W_0(T-s)}\right\|^2\mathrm{d}s\right)^{1/2} \right)
	 +\frac{2+5\sqrt{\log(2/\delta)}}{\sqrt{m}}\leq \frac{\varepsilon}{2}.
	\]
	Let $\mathcal{D}$ be a distribution over $\mathcal{X}\times \{-1,1\}$ and $(X,V)=\{(x_i,v_i)\}_{i=1}^m\sim  \mathcal{D}^m$ be a random sample of size $m\geq \tilde{m}(\varepsilon,\delta)$. Note that if
	\[\sup_{H\in\mathcal{H}_{\Theta, \Lambda}}\left|R(H)-\widehat{R}_{(X,V)}(H)\right|\leq \frac{\varepsilon}{2},\]
	then for any hypothesis $H \in\mathcal{H}_{\Theta, \Lambda}$
	\[ \begin{array}{rcl}
	R(H^{\textrm{ERM}}) - R(H)&=&R(H^{\textrm{ERM}}) - \widehat{R}_{(X,V)}(H^{\textrm{ERM}}) + \widehat{R}_{(X,V)}(H^{\textrm{ERM}}) - \widehat{R}_{(X,V)}(H) \\
		&&  + \widehat{R}_{(X,V)}(H) -R(H)\\
	&\leq& \varepsilon.\\
	\end{array}
	\]
	Hence $R(H^{\textrm{ERM}})\leq \inf_{H\in \mathcal{H}_{\Theta, \Lambda}} R(H)+\varepsilon$ and
	\[\mathbb{P}\left(R(H^{\textrm{ERM}})\leq \inf_{H\in \mathcal{H}_{\Theta, \Lambda}} R(H)+\varepsilon\right)\geq 
	\mathbb{P}\left(\sup_{H\in\mathcal{H}_{\Theta, \Lambda}}\left|R(H)-\widehat{R}_{(X,V)}(H)\right|\leq \frac{\varepsilon}{2}\right).\]
	Finally, note that by Theorem \ref{thm:GenErrSto} and the definition of $\tilde{m}(\varepsilon,\delta)$ we have
	\[\mathbb{P}\left(\sup_{H\in\mathcal{H}_{\Theta, \Lambda}}\left|R(H)-\widehat{R}_{(X,V)}(H)\right|\leq \frac{\varepsilon}{2}\right)\geq 1-\delta.\]
	This completes the proof.
	\end{proof}
	%%%%	
	%%%% Rem: PAC vs PAC-Bayes
	%%%%
	\begin{rem} Corollary \ref{Cor:PACAgnosticRNN} shows an additional major advantage of the bound obtained in Theorem \ref{thm:GenErrSto}: it (quantitatively) guarantees that the empirical risk minimiser is the best-in-class hypothesis with high probability. The bounds discussed for example in Remark \ref{Rem:VCvsRademacher} aim to directly lower the risk by choosing parameters for the hypothesis that minimises the upper-bound of the risk (the right-hand side term in (\ref{Eq:VCdimBoundSVM})). The efficiency of such technique is thus very dependent on said bound being tight. 
%	If we choose this strategy, tighter bounds can usually be obtained using a PAC-Bayes approach. Assuming for example that $A=\mathrm{I}_n$, for $u\in \mathbb{R}^{n\times r}$ and any constant $C>0$, the following holds (see \cite{GLLM}) with probability at least $1-\delta$, for all $\omega \in \mathbb{R}^n$ and $b\in \mathbb{R}$ simultaneously
%	\[R(H_{u,\omega,b})\leq 
%	\frac{1}{1-e^{-C}} \left(
%	1-\exp\left(
%	-\left(
%		C\widehat{R}_{(X,V)}(H_{u,\omega,b}) + 
%			\frac{
%			\frac{1}{2}\left(\|\omega\|^2+b^2 \right)+
%\log\left(\frac{1}{\delta}\right)			
%			}{m}
%	\right)
%	\right)
%	\right).\]
%	The PAC-Bayes approach will then aim to minimise the regularised empirical risk
%	\[
%	C\widehat{R}_{(X,V)}(H_{u,\omega,b}) + 
%			\frac{
%			\|\omega\|^2+b^2		
%			}{2m}
%	\]
%	instead of the empirical risk $\widehat{R}_{(X,V)}(H_{u,\omega,b})$ suggested by a PAC approach.
	\end{rem}
	%%%%%%%%%%%%%%%%%%%%%%%%%%%%%%%%%%%%%%%%%%%%%%%%%%%%%%%
	%%%%%%%%%%%%%%%%%%%%%%%%%%%%%%%%%%%%%%%%%%%%%%%%%%%%%%%%
	%%
	% A study of the empirical risk
	%%
	%%%%%%%%%%%%%%%%%%%%%%%%%%%%%%%%%%%%%%%%%%%%%%%%%%%%%%%%
	%%%%%%%%%%%%%%%%%%%%%%%%%%%%%%%%%%%%%%%%%%%%%%%%%%%%%%%%
	\section{A study of the empirical risk}\label{Sec:StudyERM}
	In this section, we aim to provide a better understanding of the empirical risk (in view of our future work on the non-linear case). We recall that in the pure robust stochastic case (i.e., where the covariance matrix $A$ is positive definite) and given a sample $(X,V)=\{(x_i,v_i)\}_{i=1}^m$, the empirical risk of a hypothesis $H_{u,\omega,b}$ is given by the formula
	\[\widehat{R}_{(X,V)}(H_{u,\omega,b})=\frac{1}{m}\sum_{i=1}^m \Phi\left(-v_i\cdot\frac{\inner{\nu_{x_i,u}}{\omega}+b}{\sqrt{\omega^\top A\omega}}\right).\]
	In the subsequent subsections, we will compare and draw parallels between the empirical risk minimisation (ERM) and the support vector machine (SVM) approaches and show that stochastic (linear) RNNs keep a ``partial signature'' of the input path as a summary of the information about the path.
	%%%%%%%%%%%
	%%
	% An interpretation via margins
	%%
	%%%%%%%%%%%
	
	\subsection{An interpretation via margins}
	By making the change of variable $\alpha=A^{1/2}\omega$, one can rewrite
	\[\frac{\inner{\nu_{x,u}}{\omega}+b}{\sqrt{\omega^\top A\omega}}=\frac{\inner{A^{-1/2}\nu_{x,u}}{\alpha}+b}{\|\alpha\|_2}.\]
	Note that the absolute value of the quantity above is the distance of the ``transformed mean'' $A^{-1/2}\nu_{x,u}$ from the hyperplane $H(\alpha,b)$. Given such a hyperplane, let $I_0$ be the set of indices of input paths whose transformed means are correctly classified and $m_0:=|I_0|$ be its cardinal. Define the corresponding margin $\rho_0$ as the distance between the hyperplane $H(\alpha,b)$ and the closest correctly classified transformed average $A^{-1/2}\nu_{x_i,u}$,
	\[\rho_0:=\min_{ i \in I_0} \mathrm{d}(A^{-1/2}\nu_{x_i,u},H(\alpha,b)).\]
	Then, for all $i\in I_0$, one has
	\[\Phi\left(-v_i\cdot\frac{\inner{A^{-1/2}\nu_{x_i,u}}{\alpha}+b}{\|\alpha\|_2}\right)\leq \Phi(-\rho_0).\]
	Similarly, let $I_1$ be the complement of $I_0$ and $\rho_1$ the distance of the furthest misclassified average $A^{-1/2}\nu_{x_i,u}$ from the hyperplane $H(\alpha,b)$
	\[\rho_1:=\max_{i\in I_1} \mathrm{d}(A^{-1/2}\nu_{x_i,u},H(\alpha,b)).\]
	Then, for all $i\in I_1$, one has
	\[\Phi\left(-v_i\cdot\frac{\inner{A^{-1/2}\nu_{x_i,u}}{\alpha}+b}{\|\alpha\|_2}\right)\leq \Phi(\rho_1).\]
	Using these inequalities, one can bound the empirical risk as follows
	\[\widehat{R}_{(X,V)}(H) \leq \frac{m_0\Phi(-\rho_0)+(m-m_0)\Phi(\rho_1)}{m}.\]
	Intuitively, decreasing the upper bound above requires a balance between, on the one hand, correctly classifying the averages $A^{-1/2}\nu_{x_i,u}$ with a large margin $\rho_0$ and, on the other hand, misclassifying as few averages as possible with the smallest worst ``misclassification margin'' $\rho_1$. This is reminiscent of the soft SVM algorithm (see for example \cite{CV, MRT}), which is the approach adopted in \cite{NKDGRH} to classify the data according to the statistics of their classes. Let us recall that the soft SVM algorithm consists of solving the following (constrained convex) optimisation problem (the parameter $\lambda\geq 0$ is to be freely chosen depending on the desired properties of the final classifier)
	\begin{equation}\label{SVM-primal}
	\min_{\alpha,b,\xi}\frac{1}{2}\left\|\alpha\right\|_2^2+\lambda\left\|\xi\right\|_1
	\quad \textrm{subject to}\quad v_i\left(\inner{\alpha}{A^{-1/2}\nu_{x_i,u}}+b\right)\geq 1-\xi_i\;\;;\;\; \xi_i\geq 0
	;\;\; \forall 1\leq i\leq m.
	\end{equation}
	Classically, this is equivalent to the dual (quadratic) problem
	\[\max_{\theta} \sum_{1\leq i\leq m}\theta_i-\frac{1}{2}\left\|\sum_{1\leq i\leq m}{\theta}_iv_iA^{-1/2}\nu_{x_i,u}\right\|_2^2
	\quad \textrm{subject to}\quad 
	\inner{\theta}{v}=0\;\;;\;\;
	\forall i:0\leq {\theta}_i\leq \lambda.\]
	Given its minimizer $\theta^*$, the solution of the primal SVM problem \eqref{SVM-primal} can be computed as the direction $\alpha = \sum_{i\leq m}{\theta}^*_iv_iA^{-1/2}\nu_{x_i,u}$ and, given $i \in [\![1,m]\!]$ such that $0< {\theta}^*_i< \lambda$, $b=v_i -\inner{\alpha}{A^{-1/2}\nu_{x_i,u}}$.\\
	
	First, note that this optimisation problem is not convex anymore if we include the optimisation over the pre-processing map $u$. Second, while the solutions (for a fixed $u$) of the ERM and soft SVM may be similar in some \textit{generic} situations, there are some significant difference in behaviour and interpretation in the results of the two algorithms\footnote{In the following arguments, we show these differences for classification tasks for points in the plane. In our setting, this is equivalent to taking the dimensions $r=n=2$, the pre-processing map $u=\mathrm{Id}$ and the input paths as constant paths.}:
	% that make substituting one for another inadvisable
	\begin{itemize}
	\item The solution to the ERM algorithm is sensitive to the number of inputs in each class (thus, in some way, ``learning'' the data generating distribution) while that of the SVM only depends on the support vectors (Figure~\ref{Fig:DpdNbrPoints}).
	\begin{figure}[h]
	\centering
	\includegraphics[scale=0.55]{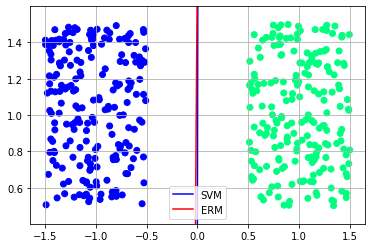}
	\includegraphics[scale=0.55]{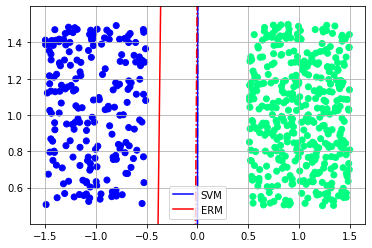}
	\caption{ERM (obtained via a gradient descent algorithm) and SVM may produce similar results in generic situations (left). However, ERM is more sensitive to the relative number of inputs in each class (right). Dotted lines on the right plot recall the position of the classifiers from the left plot.}
	\label{Fig:DpdNbrPoints}
	\end{figure}
	\item Another key difference between these two algorithms lies in their respective objectives: the ERM attempts to find a hyperplane where most of the data is (correctly classified and) far away from said hyperplane while the soft SVM attempts the same but only with respect to the support vectors. The results of these procedures can lead sometimes to very different outputs (Figure~\ref{Fig:DiffInterp}).
	\begin{figure}[h]
	\includegraphics[scale=0.75]{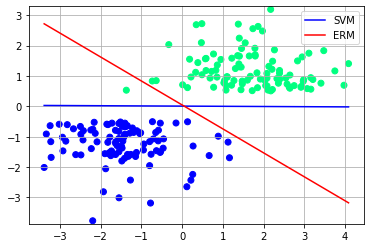}
	\caption{ERM and SVM seek two different geometric objectives that may result in very different outputs.}
	\label{Fig:DiffInterp}
	\centering
	\end{figure}
	\item The two algorithms have different sensitivities to outliers and mislabelled training data (which we will discuss in the next subsection.) 
	\item Given the preprocessing map $u$, if one denotes the hyperplane parameters returned by the SVM algorithm by $(\omega_u,b_u)$, then the smoothness of the map $u\mapsto (\omega_u,b_u)$ (and therefore that of $u\mapsto \widehat{R}_{(X,V)}(H_{u,\omega_u,b_u})$) is much less trivial to prove or even define, thus rendering the use of classical gradient descent algorithms to optimise over $u$ unjustified (indeed, attempting to do so in several numerical experiments, the algorithm failed to converge.)
	\end{itemize}
	\begin{rem} In \cite{NKDGRH}, the SVM algorithm is successfully used to separate classes that can be separated by their statistics (for example, the realisations of two Gaussian processes.) A ``good'' pre-processing map $u$ (based on a mathematical formula) is chosen beforehand; thus avoiding the use of gradient descent to optimise over this parameter.
	\end{rem}
	%%%%%%%%%%%
	%%
	% Robustness and further analysis of the ERM
	%%
	%%%%%%%%%%%
	\subsection{Robustness and further analysis of the ERM}\label{subsec:Robust_Ana_ERM}
	Making again the change of variable $\alpha=A^{1/2}\omega$, we saw from the above that, informally, an ERM algorithm has the task to find parameters $(u,\omega, b)$ such that all the transformed averages $A^{-1/2}\nu_{x_i,u}$ are correctly classified, i.e.,
	\[\forall i\in[\![1,m]\!]: \quad v_i\left( \inner{A^{-1/2}\nu_{x_i,u}}{\alpha}+b \right) \geq 0,\]
	and as distant from the hyperplane $H(\alpha, b)$ as possible (thus prompting $\Phi\left(-v\cdot\frac{\inner{\nu_{x,u}}{\omega}+b}{\sqrt{\omega^\top A \omega}}\right)$ to be as small as possible) while working under the constraint that the overall sum of the losses associated to each training input (i.e., the empirical risk) has to be as small as possible. The combination of the latter with the loss function involving the Gaussian CDF is the reason why this ERM algorithm differs from a simple classification of the transformed averages $A^{-1/2}\nu_{x_i,u}$. For example, if we assume that the two clouds
	\[C_+=\{A^{-1/2}\nu_{x_i,u}: \;\;\; v_i=1\}
	\quad \textrm{and} \quad
	C_-=\{A^{-1/2}\nu_{x_i,u}: \;\;\; v_i=-1\}\]
	are concentrated and well separated and then introduce an additional training input $x_*$ with label $v_*=1$ but such that $A^{-1/2}\nu_{x_*,u}$ is much closer to $C_-$ than to $C_+$, it is then very possible for the ERM to choose to misclassify $x_*$ rather than to correctly classify it (doing the latter might mean an increase in the empirical risk as the cloud $C_-$ moves much closer to the new hyperplane $H(\alpha, b)$.) In practice, the label $v_*$ given to the path $x_*$ could be a wrong one (i.e., mislabelled training data). However, this does not necessarily alter the result of this ERM algorithm in a drastic way (which would be the case for the hard SVM algorithm or a soft SVM algorithm with a bad choice of the regularising parameter $\lambda$). Figure \ref{Fig:MislabelSVMGD}\footnote{Here again, we illustrate our point with the classification task of planar points.} shows the dependence of the ERM and the SVM algorithms on mislabelled data.
	\begin{figure}[h]
	\includegraphics[scale=0.4]{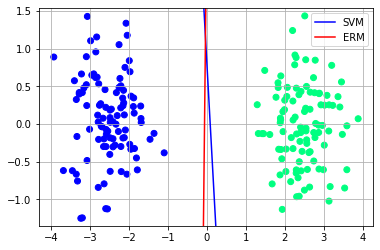}
	\includegraphics[scale=0.4]{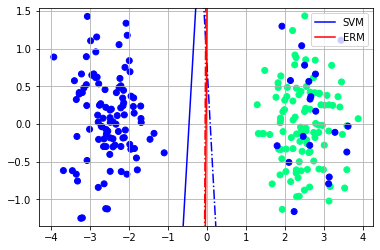}
	\includegraphics[scale=0.4]{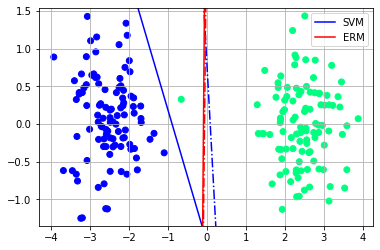}
	\centering
	\caption{The resulting hyperplanes obtained by ERM and SVM by introducing mislabelled training data (the dotted lines represent the hyperplanes obtained in the first experiment).}
	\label{Fig:MislabelSVMGD}
	\end{figure}
		
	We analyse now the bounds $\|u\|\leq \Lambda$ and $|b|\leq \Theta$ in Theorem \ref{thm:GenErrSto}. While these were necessary to obtain the bounds in said theorem, they are also necessary for the ERM to converge in general. Let us first start with the bound $\|u\|\leq \Lambda$. As far as the separation of the means $\nu_{x_i,u}$ of the processed inputs is concerned, $\frac{u}{\|u\|} $ is responsible for geometrically separating these means as well as possible, while $\|u\|$ is an amplifying factor that could be exploited for adjusting to noise and randomness in the evaluation maps once a perfect separation is achieved. For example, if $(u,\omega, b)$ are parameters of the algorithm such that all the averages $\nu_{x_i,u}$ are correctly classified in the sense that
	\[ \min_{1\leq i\leq m} v_i\left( \inner{\nu_{x_i,u}}{\omega}+b \right) \geq 0 \quad
	\textrm{and} \quad
	 \max_{1\leq i\leq m} v_i\left( \inner{\nu_{x_i,u}}{\omega}+b \right) > 0,\]
	then
	\[\lim_{\alpha \to +\infty} \widehat{R}_{(X,V)}(H_{\alpha u,\omega,\alpha b})=0.\]
	In other words, once a training algorithm finds a pre-processing map $u$ that enables a correct hyperplane separation of (the weighted means $\nu_{x,u}$ of) the data, it will tend to linearly scale the  norm of $u$ to infinity with two consequences: first, further distancing the means $\nu_{x_i,u}$ (or more precisely $A^{-1/2}\nu_{x_i,u}$ as seen earlier) from the classifying hyperplane and second, distancing these averages among themselves so that, with high probability, the random vectors $y(T,x_i)$ 's take their values in similar non-intersecting elliptic domains (getting larger with $\alpha$) centred at the $\nu_{x_i,u}$'s (since the Gaussian random vectors $y(T,x_i)$'s have the same covariance matrix $A$.) Hence, if there are no bounds on the norm of $u$ and if perfect classification is possible, the (non-converging) algorithm will try to transition from a robust classification to an accurate one (more details below.) The bound $|b|\leq \Theta$ prevents in some particular scenarios the choice of hyperplanes whose distance from the origin tends to infinity. If we take for example the case where all the data belong to the same class (less extreme examples can also be explicited), say ``$+1$'', and if $(u,\omega, b_0)$ are parameters of the algorithm such that all the averages $\nu_{x_i,u}$ are correctly classified, then
	\[\lim_{b \to +\infty} \widehat{R}_{(X,V)}(H_{u,\omega,b_0+b})=0.\]
	Note also that the bound on the norm of $u$ (and $b$) are interchangeable with a rescaling of the covariance matrix $A$ as the classifier with parameters $({\alpha u,\omega,\alpha b})$ and covariance matrix $A$ generates the same risk as the classifier with parameters $({ u,\omega, b})$ and covariance matrix $A/\alpha^2$. For this reason, we choose the bound $\Lambda=1$ in the numerical experiments and control the covariance matrix through a noise scale to be chosen carefully (see Section \ref{sec:num_exp}).\\
	
	Finally, we discuss the condition of definiteness of the covariance matrix $A$ and its role in the robustness of the classification algorithm. When $A$ is positive definite, we have seen that the goal of the ERM
	%, subject to the usual constraints on $\|u\|$ and $|b|$, 
	is solving the minimisation problem
		\begin{equation}\label{ERM-def}
		\min_{\|u\|\leq \Lambda,|b| \leq \Theta, \omega} \frac{1}{m}\sum_{i=1}^m \Phi\left(-v_i\cdot\frac{\inner{\nu_{x_i,u}}{\omega}+b}{\sqrt{\omega^\top A \omega}}\right).
		\end{equation}
	On the one hand, if the empirical risk is interpreted as a measure of the precision of the classification task on the training set, then we see that in this regime ($A$ positive definite), a full precision (null risk) is not achievable (due to the randomness of the evaluation map). The same applies to the classification of the data on a validation set as the label is a random variable whose value will depend on the current simulation (of the Brownian motion and the solution to the S.D.E). On the other hand, and as we have previously seen, note that even for a fixed pre-processing map $u$, it is clear that the choice of the optimal parameters depends explicitly on all the training data, providing in this way a robustness against mislabelled data in the training set. For these reasons and the ones detailed above, we call this the robust or the stochastic regime.\\
	If we now take $A=0$, the objective of an ERM becomes the solution of the minimisation problem
	\[
	\min_{\|u\|\leq \Lambda,|b| \leq \Theta, \omega} \frac{1}{m}\sum_{i=1}^m \mathbf{1}_{\mathrm{sign}\left(\inner{\nu_{x_i,u}}{\omega}+b\right)\neq v_i}.
	\]
	The task at hand is a mere classification of the averages $\nu_{x_i,u}$ of the processed data. If there exists a pre-processing map $u$ such that these averages are separable, then an ERM algorithm will achieve a full precision on the training data set (and in general, a unique solution does not exist unless we introduce an additional constraint like maximising the margin of the classifying hyperplane as in SVM). However, this classification will generally depend only on some support vectors $\nu_{x_i,u}$ (i.e. the closest averages to the classifying hyperplane) while ignoring the information provided by the rest of the training data, hence potentially becoming sensitive and vulnerable to mislabelling and the data-generating distribution. We call such regime the accurate regime.\\
	In the general case ($A$ not positive definite but not necessarily null), the ERM may choose either the robust or the accurate regime depending on how well the data can be separated. If the averages $\nu_{x_i,u}$ are separable, the accurate regime is preferred as it leads to a null empirical risk, otherwise the ERM may choose a robust solution. We refer for example to \cite{FFF, RXYDL, TSETM} for more information on the general topic of the trade-off between accuracy and robustness.
	%%%%%%%%%%%
	%%
	% Information retained by stochastic RNNs
	%%
	%%%%%%%%%%%
	\subsection{Information retained by stochastic RNNs}
	In this subsection, we highlight that the ERM is equivalent to a minimisation of a functional of the signatures of the augmented input paths $(t,\int x_t\mathrm{d}t)_{t\leq T}$. We recall that the signature $S(z)=(Z^n)_{n\in\mathbb{N}}$ of a path $z$ defined over an interval $[0,T]$ with values in a Banach space $E$ is the sequence of its iterated integrals, i.e., for all $s\leq t$, 
	\[\left\{\begin{array}{rcl}
	Z^0{(s,t)}&=&1,\\
	Z^{1}{(s,t)}&=&\int_{s \leq u\leq t}\mathrm{d}z(u)=z(t)-z(s) \in E,\\
	Z^{n+1}{(s,t)}&=&\int_{s \leq u\leq t}Z^{n}{(s,u)}\otimes \mathrm{d}z(u) \in E^{\otimes (n+1)} \;\;\; \textrm{for all } n\in \mathbb{N}. \\
	\end{array}\right.\]
	The role of the signature in RNNs should not come as a surprise for two main reasons: 
	\begin{itemize}
		\item The signature of a path is the only information needed from a path to solve a differential equation. This is the cornerstone of the theory of rough paths (c.f. \cite{Lyons, CLL}.)
		\item Every path is uniquely characterised (up to what is called a tree-like equivalence) by its signature \cite{HL}. This fact is the basis for a lot of research in the machine learning of data streams. These streams are mapped via their (truncated) signatures to vectors in the tensor algebra, thus allowing one to use many of the classical machine learning techniques (like feed-forward neural networks) for data streams (e.g. \cite{Graham, KO, LLN}).
	\end{itemize}
	For more information, we refer the reader to the original work of K.T. Chen \cite{Chen}, the modern formulation in \cite{CLL} and the introduction to signature methods in machine learning in \cite{CK}.\\
	
	We now reformulate the ERM problem in a way that fully separates the tunable parameters of the network from the information provided by the paths in the training dataset.
	%%%%	
	%%%% THM: kernel formulation
	%%%%
	\begin{theo}\label{theo:kernel_sig} Define $\mathbb{H}$ as the Hilbert space
	\[
	\mathbb{H}:=
	\left\{\left(a,(b_k)_{k\geq 0}\right)\in \mathbb{R}\oplus\bigoplus_{k\geq 0} \mathbb{R}^r:\;\;\; 
	\sum_{k\geq 0}k! \|b_k\|_2^2 <\infty	
	\right\},\]
	with inner product given by
	\[
	\inner{\left(a,(b_k)_{k\geq 0}\right)}{\left(c,(d_k)_{k\geq 0}\right)}_\mathbb{H}=ac+\sum_{k\geq 0} k!b_k^\top d_k\quad
	\textrm{for all } \left(a,(b_k)_{k\geq 0}\right), \left(c,(d_k)_{k\geq 0}\right) \in \mathbb{H}.\]
	 Then the ERM is equivalent to the following optimisation problem
	 \[\min_{(\beta,(\theta_k)_{k\geq 0})\in \mathcal{C}_A}\sum_{i=1}^m\Phi\left(-v_i
	\inner{\left(\beta,\left(\theta_k\right)_{k\geq 0}\right)}
	{\left(1,\left(\int_0^T \frac{(T-s)^k}{k!}x(s)\mathrm{d}s\right)_{k\geq 0}\right)}_{\mathbb{H}} 
	 \right),\]
	where $\mathcal{C}_A$ is defined as
	%the cone in $\mathbb{H}$ defined by
	\begin{align*}
	& \mathcal{C}_A  := \mathcal{C}_{A,\Lambda,\Theta} \\
	&:= 
	\left\{
	(\beta,(\theta_k)_{k\geq 0})\in \mathbb{H}: \beta \in \mathbb{R}, \theta_k=u^{\top}\frac{(W_0^\top)^k}{k!}\alpha,
	 u \in \mathbb{R}^{n\times r}, \|u\|\leq \Lambda, |\beta| \leq \frac{\Theta}{\sqrt{\lambda_{\min}(A)}}, \alpha \in \mathbb{R}^n, \alpha^\top A\alpha=1
	\right\}.
	\end{align*}
	\end{theo}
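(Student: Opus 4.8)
The plan is to start from the explicit form of the empirical risk recalled at the beginning of this section and to expand the matrix exponential appearing in $\nu_{x_i,u}=\int_0^Te^{W_0(T-s)}u(x_i(s))\,\mathrm{d}s$ as a power series. Writing $e^{W_0(T-s)}=\sum_{k\geq0}\frac{(T-s)^k}{k!}W_0^k$, interchanging this series (which converges uniformly for $s\in[0,T]$) with the integral, and using linearity to pull $u$ out, one obtains
\[\nu_{x_i,u}=\sum_{k\geq0}\frac{W_0^k}{k!}\int_0^T(T-s)^k u(x_i(s))\,\mathrm{d}s=\sum_{k\geq0}\frac{W_0^k}{k!}\,u\!\left(\int_0^T(T-s)^k x_i(s)\,\mathrm{d}s\right).\]
This is the step that exposes the signature structure: the quantities $\int_0^T\frac{(T-s)^k}{k!}x_i(s)\,\mathrm{d}s$ are, after the time-augmentation $x\mapsto(t,\int x)$, the iterated integrals of the augmented path, which is why $\mathbb{H}$ is built with the weights $k!$.

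Next I would normalise away the covariance. Since $A$ is positive definite and $\|\omega\|_2=1$ on $\mathcal{H}_{\Theta,\Lambda}$, we have $\omega\notin\textrm{Ker}(A)$ and $\lambda_{\min}(A)\leq\omega^\top A\omega\leq\lambda_{\max}(A)$, so $\alpha:=\omega/\sqrt{\omega^\top A\omega}$ and $\beta:=b/\sqrt{\omega^\top A\omega}$ are well-defined with $\alpha^\top A\alpha=1$ and $|\beta|\leq\Theta/\sqrt{\lambda_{\min}(A)}$. With these, $\frac{\inner{\nu_{x_i,u}}{\omega}+b}{\sqrt{\omega^\top A\omega}}=\beta+\inner{\nu_{x_i,u}}{\alpha}$, and, transposing $W_0$ and $u$ across the inner product, one finds
\[\inner{\nu_{x_i,u}}{\alpha}=\sum_{k\geq0}\inner{\int_0^T(T-s)^k x_i(s)\,\mathrm{d}s}{u^\top\tfrac{(W_0^\top)^k}{k!}\alpha}=\sum_{k\geq0}k!\,\theta_k^\top\!\left(\int_0^T\tfrac{(T-s)^k}{k!}x_i(s)\,\mathrm{d}s\right),\]
with $\theta_k:=u^\top\frac{(W_0^\top)^k}{k!}\alpha$. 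Adding the $\beta$-term, the argument of $\Phi$ in the $i$-th summand of $\widehat{R}_{(X,V)}$ is exactly $-v_i\,\inner{(\beta,(\theta_k)_{k\geq0})}{\bigl(1,(\int_0^T\frac{(T-s)^k}{k!}x_i(s)\,\mathrm{d}s)_{k\geq0}\bigr)}_{\mathbb{H}}$, which is precisely the $i$-th summand in the claimed reformulation.

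It then remains to check that both vectors entering this inner product lie in $\mathbb{H}$ (which also legitimises the rearrangements above). For the parameter vector, $k!\|\theta_k\|_2^2\leq\Lambda^2\|\alpha\|_2^2\,\|W_0\|^{2k}/k!$ is summable; for the data vector, Cauchy--Schwarz in the time variable gives $\|\int_0^T\frac{(T-s)^k}{k!}x_i(s)\,\mathrm{d}s\|_2\leq\frac{1}{k!}\frac{T^{k+1/2}}{\sqrt{2k+1}}\|x_i\|_{L^2}$, hence $k!\|\cdot\|_2^2\leq\frac{T^{2k+1}}{k!(2k+1)}\|x_i\|_{L^2}^2$ is summable as well. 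Consequently the weighted series defining the $\mathbb{H}$-inner product converges absolutely (Cauchy--Schwarz on the weighted sequences), and this same absolute convergence is what permits interchanging the exponential series with the integral in $\nu_{x_i,u}$.

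Finally I would establish the equivalence of the two optimisation problems. The map $(u,\omega,b)\mapsto(\beta,(\theta_k)_{k\geq0})$ constructed above sends every ERM-feasible triple to an element of $\mathcal{C}_A$ (the constraints $\|u\|\leq\Lambda$, $\alpha^\top A\alpha=1$, $|\beta|\leq\Theta/\sqrt{\lambda_{\min}(A)}$ having been verified) with an identical value of the summed objective, so the minimum over $\mathcal{C}_A$ is at most $m$ times the ERM value. For the converse, given $(\beta,(\theta_k)_{k\geq0})\in\mathcal{C}_A$ with a representing pair $(u,\alpha)$, I would put $\omega:=\alpha/\|\alpha\|_2$, $b:=\beta/\|\alpha\|_2$ (keeping $u$), which again reproduces the same value. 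The delicate point — and where I expect the real work to lie — is the feasibility bound $|b|\leq\Theta$: this does \emph{not} follow from $\alpha^\top A\alpha=1$ and $|\beta|\leq\Theta/\sqrt{\lambda_{\min}(A)}$ alone (it only yields $|b|\leq\Theta\sqrt{\lambda_{\max}(A)/\lambda_{\min}(A)}$), so one must exploit the non-uniqueness of the representation $(u,\alpha)$ of a prescribed sequence $(\theta_k)_{k\geq0}$ — choosing $\alpha$ Euclidean-large subject to $\alpha^\top A\alpha=1$ — or else interpret ``equivalence'' as equality of optimal values together with a correspondence of minimisers rather than a bijection of feasible sets. The global factor $1/m$ between $\widehat{R}_{(X,V)}$ and the displayed sum does not affect the minimiser.
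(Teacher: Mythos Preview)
Your proposal is correct and follows essentially the same route as the paper: rewrite the ERM with the normalisation $\alpha^\top A\alpha=1$, expand $\nu_{x,u}$ through the exponential series $e^{W_0(T-s)}=\sum_{k\geq0}\frac{(T-s)^k}{k!}W_0^k$, and recognise $\inner{\alpha}{\nu_{x,u}}+\beta$ as the $\mathbb{H}$-inner product with $\theta_k=u^\top\frac{(W_0^\top)^k}{k!}\alpha$.

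You are in fact more careful than the paper in two respects. First, you verify that both the parameter vector $(\beta,(\theta_k)_k)$ and the data vector $(1,(\int_0^T\frac{(T-s)^k}{k!}x_i(s)\,\mathrm{d}s)_k)$ lie in $\mathbb{H}$; the paper does not check this. Second, the issue you flag about the converse direction of the equivalence is genuine: the paper simply asserts that the original ERM and the reparametrised problem are ``equivalent'' without addressing why enlarging the constraint on the shift from $|b|\leq\Theta$ to $|\beta|\leq\Theta/\sqrt{\lambda_{\min}(A)}$ does not lower the optimal value. Your observation that one must either exploit the freedom in choosing a representing $(u,\alpha)$ for a given $(\theta_k)_k$, or read ``equivalent'' as equality of optimal values rather than a bijection of feasible sets, is exactly the kind of care the paper omits. (One clean way to close the gap: drop the normalisation $\|\omega\|_2=1$ and allow $\omega\neq0$ in the original ERM---the objective is $0$-homogeneous in $\omega$, so this does not change the value; then for any $(\alpha,\beta)\in\mathcal{C}_A$ the choice $\omega=c\alpha$, $b=c\beta$ with $c>0$ small enough yields $|b|\leq\Theta$.)
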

	\begin{proof}
	We write the problem at hand 
	 \[
	 \min_{\|u\|\leq \Lambda,|b| \leq \Theta, \omega} \sum_{i=1}^m\Phi\left(-v_i\cdot\frac{\inner{\omega}{\nu_{x_i,u}}+b}{\sqrt{\omega^\top A \omega}}\right)
	\quad\textrm{s.t.}\quad
	\omega\neq 0,
	\]
	in the simpler equivalent form
	\[\min_{\|u\|\leq \Lambda, |\beta| \leq \frac{\Theta}{\sqrt{\lambda_{\min}(A)}}, \omega} \sum_{i=1}^m\Phi\left(-v_i(\inner{\alpha}{\nu_{x_i,u}}+\beta )\right)
	\quad\textrm{s.t.}\quad
	\alpha^\top A\alpha=1.\]
	Note now that we can expand the mean $\nu_{x,u}$ into a series
	\[\nu_{x,u}=\int_0^T e^{W_0(T-s)}u(x(s))\mathrm{d}s
	=\sum_{k=0}^{\infty}k!\frac{W_0^k}{k!}u\left(\int_0^T \frac{(T-s)^k}{k!}x(s)\mathrm{d}s\right),\]
	so that one can separate, in the inner product $\inner{\alpha}{\nu_{x,u}}+\beta$, the tunable parameters from a functional of the input signals
	\[\begin{array}{ccl}
	\inner{\alpha}{\nu_{x,u}}+\beta&=&
	\sum\limits_{k=0}^{\infty}k!\alpha^{\top}\frac{W_0^k}{k!}u\left(\int_0^T \frac{(T-s)^k}{k!}x(s)\mathrm{d}s\right)+\beta\\
	&=&	\inner{\left(\beta,\left(u^{\top}\frac{(W_0^\top)^k}{k!}\alpha\right)_{k\geq 0}\right)}
	{\left(1,\left(\int_0^T \frac{(T-s)^k}{k!}x(s)\mathrm{d}s\right)_{k\geq 0}\right)}_{\mathbb{H}}.\\
	\end{array}
	\]
	This concludes the proof.
	\end{proof}
	%%%%	
	%%%% Rem: The sequence looks like a signature
	%%%%
	\begin{rem}\label{Rem:SigKerRNN} Note that
	\[\int_0^T \frac{(T-s)^k}{k!}x(s)\mathrm{d}s=
	\int_{0<s<u_1<\cdots<u_k<T} \mathrm{d}X(s)\mathrm{d}u_1\ldots\mathrm{d}u_k,
	\]	
	where $X_{\boldsymbol{\cdot}}:=\int_{0}^{\boldsymbol{\cdot}} x(s)\mathrm{d}s$ denotes a primitive of $x$. Hence the sequence $\left(\int_0^T \frac{(T-s)^k}{k!}x(s)\mathrm{d}s\right)_{k\geq 0}$ can be obtained from the signature of the path $(t,X_t)_{t\leq T}$. 
	\end{rem}
	Theorem \ref{theo:kernel_sig} highlights the information retained by a stochastic RNN (with architecture and loss function chosen as described in this paper.) More explicitly, we have the following theorem.
	%%%%	
	%%%% THM: info retained by RNNs
	%%%%
	\begin{theo}\label{theo:RNN_info} Both during training (solving the ERM problem) and classification (generating labels for unseen data), the continuous-time stochastic RNN (with linear activation function) is uniquely determined by
	the partial signature $\widehat{S}$ of the input path defined by 
	\begin{equation}\label{def-partial-signature}
	\widehat{S}(x):=\left(\int_0^T \frac{(T-s)^k}{k!}x(s)\mathrm{d}s\right)_{k\geq 0}.
	\end{equation}
	\end{theo}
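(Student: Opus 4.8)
The plan is to show that every operation the network performs — minimising the empirical risk during training, and assigning a (random) label to a fresh input during classification — factors through the map $x\mapsto\widehat{S}(x)$, so that two input paths sharing the same partial signature are genuinely indistinguishable to the RNN. The single analytic ingredient common to both halves of the argument is the term-by-term expansion of the mean $\nu_{x,u}$ in powers of $W_0$, so I would record its validity first. Writing $e^{W_0(T-s)}=\sum_{k\geq 0}\frac{W_0^k(T-s)^k}{k!}$ and exchanging sum and integral gives $\nu_{x,u}=\sum_{k\geq 0}W_0^k\,u\!\big(\int_0^T\frac{(T-s)^k}{k!}x(s)\,\mathrm{d}s\big)$, which manifestly depends on $x$ only through $\widehat{S}(x)$. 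The interchange is legitimate because, for a continuous (hence bounded) path $x$ on $[0,T]$, one has $\big\|\int_0^T\frac{(T-s)^k}{k!}x(s)\,\mathrm{d}s\big\|_2\leq \frac{T^{k+1}}{(k+1)!}\sup_{s\le T}\|x(s)\|_2$, so the series converges absolutely; the same estimate shows $\sum_{k\geq 0}k!\,\|\widehat{S}(x)_k\|_2^2<\infty$, i.e. $\widehat{S}(x)$ sits in the summable space underlying $\mathbb{H}$ and all the inner products appearing below are well defined. (This also retroactively justifies the same expansion used in the proof of Theorem \ref{theo:kernel_sig}.)

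For the training phase there is then essentially nothing left to prove: Theorem \ref{theo:kernel_sig} already rewrites the ERM problem as the minimisation over $\mathcal{C}_A$ of $\sum_{i=1}^m\Phi\big(-v_i\,\langle(\beta,(\theta_k)),(1,\widehat{S}(x_i))\rangle_{\mathbb{H}}\big)$, and the only way the sample $(x_i)_{i=1}^m$ enters this objective is through the vectors $(1,\widehat{S}(x_i))$. Hence the objective function, its set of minimisers, and therefore any empirical risk minimiser $H^{\mathrm{ERM}}$, depend on the training inputs solely through $\big(\widehat{S}(x_i)\big)_{i=1}^m$.

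For the classification phase I would argue at the level of the law of the predicted label. Fix trained parameters $(u,\omega,b)$ and an unseen path $x$. By the explicit solution of \eqref{eq:MainSDE} recalled in Subsection \ref{sub:GenObs}, the hidden state $y_u(T,x)$ is Gaussian with mean $\nu_{x,u}$ and covariance matrix $A$, and $A$ does not depend on $x$ (this is exactly the remark following the definition of $A$). By the expansion above, $\nu_{x,u}$ depends on $x$ only through $\widehat{S}(x)$, so the law of $y_u(T,x)$ — and a fortiori the law of the predicted label $H_{u,\omega,b}(x)=h_{\omega,b}(y_u(T,x))$, equivalently the regression function $x\mapsto\mathbb{P}(H_{u,\omega,b}(x)=1)$ given in Proposition \ref{Prop:EmpRiskForm} — is a function of $\widehat{S}(x)$ alone. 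In particular, if $\widehat{S}(x)=\widehat{S}(x')$ then $y_u(T,x)$ and $y_u(T,x')$ are equal in distribution, so the network produces statistically identical labels for $x$ and $x'$. Combining the two phases gives the claim, and via Remark \ref{Rem:SigKerRNN} one may further note that $\widehat{S}(x)$ is itself recoverable from the (genuine) signature of the augmented path $(t,\int_0^t x(s)\,\mathrm{d}s)_{t\le T}$, which is the sense in which the RNN retains only a \emph{partial} signature. The only delicate point in the whole argument is the justification of the interchange of summation and integration together with the convergence of the resulting series in $\mathbb{H}$, which is why I would settle the summability estimate for $\widehat{S}(x)$ at the very start and reuse it throughout; everything else is a direct reading of Theorem \ref{theo:kernel_sig} and of the Gaussian description of $y_u(T,x)$.
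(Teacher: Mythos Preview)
Your proof is correct and follows essentially the same route as the paper: you invoke Theorem~\ref{theo:kernel_sig} for the training phase and, for classification, use the expansion $\nu_{x,u}=\sum_{k\geq 0}W_0^k\,u(\widehat{S}(x)_k)$ together with the fact that the covariance $A$ is input-independent, exactly as the paper does. The main difference is that you supply the summability estimate justifying the interchange of sum and integral, which the paper leaves implicit.
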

	\begin{proof}
	We have already shown in Theorem \ref{theo:kernel_sig} that the ERM problem can be written as a minimisation problem of a functional of the partial signatures of the training paths (and that no other information about said paths is required). If we denote by $(u,\omega,b)$ some possible parameters of an ERM classifier $H^{\textrm{ERM}}$ and given an input path $x$, then the RNN generates the label $v=\mathrm{sign}(\inner{y_u(T,x)}{\omega}+b)$ with $y_u(T,x)\sim \mathcal{N}\left(\nu_{x,u}, A \right)$. Following the proof of Theorem \ref{theo:kernel_sig}, $\nu_{x,u}$ can also be expressed as a functional of $\widehat{S}(x)$ only
	\[\nu_{x,u}=\sum_{k=0}^{\infty}{W_0^k}u\left(\int_0^T \frac{(T-s)^k}{k!}x(s)\mathrm{d}s\right).\]
	This completes the proof of the claims.	
	\end{proof}
	We believe that a similar result as Theorem \ref{theo:RNN_info} holds for generic RNNs, i.e., that continuous-time RNNs (even with non-linear activation functions) can be viewed as kernel machines involving the (full) signatures of the input paths. We refer for example to \cite{Lim} (and the references therein) for first results in this direction.\\
	
	Even though the RNN uses only the partial signature \eqref{def-partial-signature} of the time-lifted input paths (instead of the full signature), it turns out that it is still a faithful representation of continuous paths.
	%%%%	
	%%%% THM: partial Signatures are faithful
	%%%%
	\begin{theo}\label{theo:Sig_faithful} The partial signature map $\widehat{S}$ defined over the space of continuous paths defined over an interval $[0,T]$ and with values in a finite-dimensional space is injective.
	\end{theo}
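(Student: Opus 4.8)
The plan is to exploit the linearity of $\widehat{S}$ and reduce the injectivity to a classical moment-type uniqueness statement. Suppose $x$ and $y$ are continuous paths on $[0,T]$ with values in a finite-dimensional space and $\widehat{S}(x)=\widehat{S}(y)$; set $z:=x-y$. By the definition \eqref{def-partial-signature} and linearity of the integral, the hypothesis reads $\int_0^T \frac{(T-s)^k}{k!}z(s)\,\mathrm{d}s=0$ for every $k\geq 0$. The factor $1/k!$ is harmless and can be dropped, and since the identity holds coordinate-wise, it suffices to treat the scalar case: show that a continuous function $z:[0,T]\to\mathbb{R}$ with $\int_0^T (T-s)^k z(s)\,\mathrm{d}s=0$ for all $k\in\mathbb{N}$ vanishes identically.

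Next I would perform the change of variables $t=T-s$ to rewrite the condition as $\int_0^T t^k w(t)\,\mathrm{d}t=0$ for all $k\geq 0$, where $w(t):=z(T-t)$ is again continuous on $[0,T]$. By linearity this says precisely that $w$ is orthogonal in $L^2([0,T])$ to every polynomial. I then invoke the Weierstrass approximation theorem: pick polynomials $p_n$ converging uniformly to $w$ on $[0,T]$; then $\int_0^T w(t)^2\,\mathrm{d}t=\lim_n\int_0^T p_n(t)w(t)\,\mathrm{d}t=0$, so $w\equiv 0$ by continuity, hence $z\equiv 0$ and $x=y$. If one prefers to avoid $L^2$ language, the same estimate reads $\bigl|\int_0^T w^2\bigr|=\bigl|\int_0^T (w-p_n)w\bigr|\leq \|w-p_n\|_\infty\int_0^T |w|\to 0$.

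There is essentially no serious obstacle here: the statement is a soft consequence of the density of polynomials, and in particular the exponential $e^{W_0(T-s)}$ plays no role at all — only the triangular family of monomial weights $(T-s)^k/k!$ matters. The only points requiring minor care are the reduction to a scalar, continuous $w$ (so that $\|w\|_{L^2}=0$ indeed forces $w\equiv 0$ pointwise) and the exchange of limit and integral in the approximation step, which is justified by uniform convergence on the compact interval $[0,T]$.
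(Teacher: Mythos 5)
Your proof is correct and follows essentially the same route as the paper's: reduce by linearity and a change of variables to the vanishing of all polynomial moments of a scalar continuous function, then use Weierstrass approximation to conclude $\int_0^T w^2 = 0$ and hence $w \equiv 0$. No gaps.
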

	\begin{proof}
	Without loss of generality, we will consider the case of real-valued paths. By a simple change of variable and a reparametrisation of the path, it is equivalent to show that the map
	\[\widetilde{S}(x):\quad x\longmapsto \left(\int_0^T s^kx(s)\mathrm{d}s\right)_{k\geq 0}\]
	is injective. As $\widetilde{S}$ is linear it is also equivalent to show that $\widetilde{S}(x)=0$ if and only if $x=0$. Let then $x$ be a continuous real-valued path such that $\widetilde{S}(x)=0$. Then for every polynomial function $P$, one has $\int_0^T P(s)x(s)\mathrm{d}s=0$. Let $\varepsilon>0$ be arbitrary and let $P$ be a polynomial function such that $\|x-P\|_{\infty,[0,T]}\leq \varepsilon$. Then one has
	\[\left|\int_0^T x^2(s)\mathrm{d}s \right|
	= \left|\int_0^T x(s)(x(s)-P(s))\mathrm{d}s \right|
	\leq \varepsilon \|x\|_{\infty,[0,T]} T.
	\]
	Therefore $\int_0^T x^2(s)\mathrm{d}s=0$, from which we conclude that indeed $x=0$.
	\end{proof}
	%%%%	
	%%%% Rem: Non-distinguishable signatures
	%%%%
	\begin{rem} Despite the partial signature transform being injective, it is still possible for the linear RNN not to be able to distinguish two paths $x$ and $\widetilde{x}$ if the inner product of their partial signatures with the matrices $\left({W_0^k}u\right)_{k\geq 0}$ are the same, i.e.,
	\[\nu_{x,u}=\sum_{k=0}^{\infty}{W_0^k}u\left(\int_0^T \frac{(T-s)^k}{k!}x(s)\mathrm{d}s\right)=
	\sum_{k=0}^{\infty}{W_0^k}u\left(\int_0^T \frac{(T-s)^k}{k!}\widetilde{x}(s)\mathrm{d}s\right)=\nu_{\widetilde{x},u}.\]
	Replacing the sequence $\left({W_0^k}u\right)_{k\geq 0}$ by another whose entries can be independent is the cornerstone and the reason behind the power of signature techniques. However, these techniques are confronted with computational issues and therefore the signature has to be restricted to low orders. RNNs are able however to surmount this obstacle by increasing the dimension $n$ and thus allowing for more degrees of freedom (while signatures are computed implicitly as shown above through the recursive architecture.)
	\end{rem}
	
	Using the factorial decay of the elements of the partial signature sequence (which is trivial and explicit in our case), we can then obtain a good approximation for the ERM by truncating the signature. To show such result, we first prove the following technical lemma.
	%%%%	
	%%%% Lemma: Factorial decay control
	%%%%
	\begin{lemma}\label{lemma:factorial_decay} Let $N\in \mathbb{N}^*$, $u\in \mathbb{R}^{n\times r}$, $\omega \in \mathbb{R}^n$ and $x\in L^1 \left([0,T],(\mathbb{R}^r,\|\cdot\|_2)\right)$. Then
	\[\left| \inner{\omega}{\nu_{x,u}}- 
	\sum\limits_{k=0}^{N}\omega^{\top}{W_0^k}u\left(\int_0^T \frac{(T-s)^k}{k!}x(s)\mathrm{d}s\right)
	\right|
	\leq 
	\left\|\omega\right\|_2
	\left\|u\right\|
	e^{\left\|W_0\right\| T} \frac{\left(\left\|W_0\right\| T\right)^{N+1}}{(N+1)!}
	\int_0^T \left\|x(s)\right\|_2 \mathrm{d}s.
	\]
	\end{lemma}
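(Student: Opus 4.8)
The plan is to recognise $\sum_{k=0}^{N}\omega^{\top}W_0^k u\!\left(\int_0^T \frac{(T-s)^k}{k!}x(s)\,\mathrm{d}s\right)$ as the $N$-th partial sum of a convergent series expansion of $\inner{\omega}{\nu_{x,u}}$, and then to control the tail of that series uniformly in the integration variable. Concretely, for every $s\in[0,T]$ one has $e^{W_0(T-s)}=\sum_{k\geq 0}\frac{(T-s)^k}{k!}W_0^k$, and this series converges absolutely and uniformly in $s$ since its partial sums are bounded in spectral norm by $e^{\|W_0\|T}$; combined with $x\in L^1$, this allows interchanging the integral in $\nu_{x,u}=\int_0^T e^{W_0(T-s)}u(x(s))\,\mathrm{d}s$ with the sum, yielding $\nu_{x,u}=\sum_{k\geq 0}W_0^k u\!\left(\int_0^T \frac{(T-s)^k}{k!}x(s)\,\mathrm{d}s\right)$. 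Hence the quantity inside the absolute value on the left-hand side is exactly the tail $\sum_{k\geq N+1}\omega^{\top}W_0^k u\!\left(\int_0^T \frac{(T-s)^k}{k!}x(s)\,\mathrm{d}s\right)$, which I would rewrite as the single integral $\int_0^T \omega^{\top}\!\left(e^{W_0(T-s)}-\sum_{k=0}^{N}\frac{(T-s)^k}{k!}W_0^k\right)\!u(x(s))\,\mathrm{d}s$.

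Next I would estimate the spectral norm of the matrix remainder $R_N(M):=e^M-\sum_{k=0}^{N}M^k/k!$ when $\|M\|\leq\rho$. Submultiplicativity of the spectral norm gives $\|R_N(M)\|\leq\sum_{k\geq N+1}\rho^k/k!$, and the elementary inequality $(N{+}1{+}i)\geq i{+}1$ for $i\geq 1$ yields $(N{+}1)!/(N{+}1{+}j)!\leq 1/j!$, so $\sum_{k\geq N+1}\rho^k/k!\leq \frac{\rho^{N+1}}{(N+1)!}\sum_{j\geq 0}\rho^j/j!=\frac{\rho^{N+1}}{(N+1)!}e^{\rho}$. Applying this with $M=W_0(T-s)$ and $\rho=\|W_0\|(T-s)\leq\|W_0\|T$, together with monotonicity of $\rho\mapsto \rho^{N+1}e^{\rho}$ on $\mathbb{R}_+$, bounds the matrix part of the integrand by $\frac{(\|W_0\|T)^{N+1}}{(N+1)!}e^{\|W_0\|T}$ uniformly in $s\in[0,T]$.

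Finally I would combine the pieces: for each $s$, $\bigl|\omega^{\top}R_N(W_0(T-s))u(x(s))\bigr|\leq \|\omega\|_2\,\|R_N(W_0(T-s))\|\,\|u\|\,\|x(s)\|_2$, using linearity of $u$ and $\|u(z)\|_2\leq\|u\|\,\|z\|_2$; integrating over $[0,T]$ and pulling out the uniform bound on the matrix remainder gives precisely $\|\omega\|_2\,\|u\|\,e^{\|W_0\|T}\frac{(\|W_0\|T)^{N+1}}{(N+1)!}\int_0^T\|x(s)\|_2\,\mathrm{d}s$, as claimed. There is no serious obstacle here; the only points requiring a little care are the justification of the termwise integration (absolute/uniform convergence of the exponential series plus $x\in L^1$) and the clean factorial tail estimate $\sum_{k\geq N+1}\rho^k/k!\leq \frac{\rho^{N+1}}{(N+1)!}e^{\rho}$, both of which are routine.
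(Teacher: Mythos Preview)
Your proposal is correct and follows essentially the same approach as the paper: expand $\nu_{x,u}$ via the exponential series, identify the left-hand side as the tail $\sum_{k\geq N+1}$, and control it using the elementary estimate $\sum_{k\geq N+1}\rho^k/k!\leq e^{\rho}\rho^{N+1}/(N+1)!$ together with $\|W_0^k\|\leq\|W_0\|^k$ and $(T-s)^k\leq T^k$. The only cosmetic difference is that the paper bounds each term of the tail series individually and then sums, whereas you group the tail into the matrix remainder $R_N(W_0(T-s))$ and bound its spectral norm uniformly in $s$ before integrating; both routes use the same ingredients and yield the same constant.
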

	\begin{proof}
	Expanding the difference that we want to bound we get
	\[
	\inner{\omega}{\nu_{x,u}}- 
	\sum\limits_{k=0}^{N}\omega^{\top}{W_0^k}u\left(\int_0^T \frac{(T-s)^k}{k!}x(s)\mathrm{d}s\right)
	=\sum\limits_{k=N+1}^{\infty}\omega^{\top}{W_0^k}u\left(\int_0^T \frac{(T-s)^k}{k!}x(s)\mathrm{d}s\right).
	\]
	For every $k\in \mathbb{N}$, one trivially has
	\[
	\left|\omega^{\top}{W_0^k}u\left(\int_0^T \frac{(T-s)^k}{k!}x(s)\mathrm{d}s\right)
	\right| \leq 
	\left\|\omega\right\|_2
	\left\|W_0\right\|^k
	\left\|u\right\|
	\frac{T^k}{k!}\int_0^T \left\|x(s)\right\|_2 \mathrm{d}s,
	\]
	while for every $a\geq 0$
	\[
	\left|\sum\limits_{k=N+1}^{\infty}\frac{a^k}{k!}\right| \leq e^a \frac{a^{N+1}}{(N+1)!}.
	\]
	The combination of the three arguments above gives then the desired result.
	\end{proof}
	We can now show how one may exploit the signature-based expansion of the empirical risk in order to obtain an approximate solution to the ERM.
	%%%%	
	%%%% THM: Truncation of signature for optim
	%%%%
	\begin{theo}\label{theo:optim_sig_trunc} Let $\Theta$, $\Lambda$ and $R$ be positive real numbers and $N\in \mathbb{N}^*$. We consider the set of parameters
	\[\mathcal{P}_{\Theta, \Lambda}=\{(u,\omega,b):\;\; \|\omega\|_2=1, |b|\leq \Theta, \|u\|\leq \Lambda\}.\]
	Assume that all input paths take their values in the $L^1$-ball of radius $R$
	\[\mathcal{B}_R=\left\{x\in \mathcal{C}\left([0,T],\mathbb{R}^r\right):\;\; \int_0^T\left\| x(s)\right\|_2\mathrm{d}s \leq R\right\}.\]
	Given a sample $(X,V)=\{(x_i,v_i)\}_{i=1}^m$, let $(u_0,\omega_0,b_0)$ be a solution to the ERM problem 
	\[\min_{(u,\omega, b) \in \mathcal{P}_{\Theta, \Lambda}} \frac{1}{m}\sum_{i=1}^m \Phi\left(-v_i\cdot\frac{\inner{\nu_{x_i,u}}{\omega}+b}{\sqrt{\omega^\top A \omega}}\right),\]
	and $(\bar{u},\bar{\omega},\bar{b})$ be a solution to the ``truncated'' ERM problem 
	\[\min_{(u,\omega, b) \in \mathcal{P}_{\Theta, \Lambda}} \frac{1}{m}\sum_{i=1}^m 
	\Phi\left(-v_i\cdot
	\frac{\sum\limits_{k=0}^{N}\omega^{\top}{W_0^k}u\left(\int_0^T \frac{(T-s)^k}{k!}x(s)\mathrm{d}s\right)+b}{\sqrt{\omega^\top A \omega}}
	\right).\]
	Then
	\[0
	\leq \widehat{R}_{(X,V)}(H_{\bar{u},\bar{\omega},\bar{b}}) - \widehat{R}_{(X,V)}(H_{u_0,\omega_0,b_0})
	\leq 
	\Lambda R e^{\left\|W_0\right\| T} \sqrt{\frac{2}{\lambda_{\min}(A)\pi}}  \frac{\left(\left\|W_0\right\| T\right)^{N+1}}{(N+1)!}.\]
	\end{theo}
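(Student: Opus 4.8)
The plan is to compare, uniformly over the parameter set $\mathcal{P}_{\Theta,\Lambda}$, each empirical risk with its $N$-truncated version, and then to run a standard ``approximate minimiser'' chain of inequalities. Throughout, write $\widehat{R}^{N}_{(X,V)}$ for the truncated empirical risk, i.e.\ the objective of the second minimisation problem in the statement.

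The left-hand inequality is immediate: since $(u_0,\omega_0,b_0)$ minimises the untruncated empirical risk $\widehat{R}_{(X,V)}$ over $\mathcal{P}_{\Theta,\Lambda}$ and $(\bar u,\bar\omega,\bar b)\in\mathcal{P}_{\Theta,\Lambda}$, we have $\widehat{R}_{(X,V)}(H_{u_0,\omega_0,b_0})\le\widehat{R}_{(X,V)}(H_{\bar u,\bar\omega,\bar b})$. For the right-hand inequality, the key step is the uniform estimate
\[
\sup_{(u,\omega,b)\in\mathcal{P}_{\Theta,\Lambda}}\bigl|\widehat{R}_{(X,V)}(H_{u,\omega,b})-\widehat{R}^{N}_{(X,V)}(H_{u,\omega,b})\bigr|\le \eta,\qquad \eta:=\frac{\Lambda R}{\sqrt{2\pi\,\lambda_{\min}(A)}}\,e^{\|W_0\|T}\frac{(\|W_0\|T)^{N+1}}{(N+1)!}.
\]
This is proved term by term. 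Fix an index $i$ and recall that $\Phi$ is $\tfrac{1}{\sqrt{2\pi}}$-Lipschitz, that $\|\omega\|_2=1$ forces $\sqrt{\omega^\top A\omega}\ge\sqrt{\lambda_{\min}(A)}$, and that each continuous path $x_i$ lies in the $L^1$-ball of radius $R$, hence is in $L^1\left([0,T],(\mathbb{R}^r,\|\cdot\|_2)\right)$ with $\int_0^T\|x_i(s)\|_2\,\mathrm{d}s\le R$. Applying the Lipschitz bound to the difference of the two arguments of $\Phi$, and then Lemma~\ref{lemma:factorial_decay} (with $\|\omega\|_2=1$ and $\|u\|\le\Lambda$) to bound the difference of the two numerators by $\Lambda R\,e^{\|W_0\|T}\tfrac{(\|W_0\|T)^{N+1}}{(N+1)!}$, one obtains the per-input bound $\eta$; averaging over $i$ preserves it.

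Then I would chain the three bounds. Since $(\bar u,\bar\omega,\bar b)$ minimises $\widehat{R}^{N}_{(X,V)}$ over $\mathcal{P}_{\Theta,\Lambda}$ and $(u_0,\omega_0,b_0)\in\mathcal{P}_{\Theta,\Lambda}$,
\[
\widehat{R}_{(X,V)}(H_{\bar u,\bar\omega,\bar b})\le \widehat{R}^{N}_{(X,V)}(H_{\bar u,\bar\omega,\bar b})+\eta\le \widehat{R}^{N}_{(X,V)}(H_{u_0,\omega_0,b_0})+\eta\le \widehat{R}_{(X,V)}(H_{u_0,\omega_0,b_0})+2\eta,
\]
and $2\eta$ is exactly the claimed right-hand side, since $2/\sqrt{2\pi}=\sqrt{2/\pi}$. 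There is no serious obstacle here: the only points requiring care are keeping track of the factor $2$ arising from the two applications of the uniform estimate, and observing that it is precisely the constraint $\|\omega\|_2=1$ (equivalently, in the $\alpha$-parametrisation of Theorem~\ref{theo:kernel_sig}, $\alpha^\top A\alpha=1$) together with $A$ being positive definite that makes the denominators $\sqrt{\omega^\top A\omega}$ harmless by bounding them below by $\sqrt{\lambda_{\min}(A)}$.
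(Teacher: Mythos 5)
Your proposal is correct and follows essentially the same route as the paper: the left inequality from optimality of $(u_0,\omega_0,b_0)$, a uniform per-term bound on $|\widehat{R}_{(X,V)}-\widehat{R}^{N}_{(X,V)}|$ via the $\tfrac{1}{\sqrt{2\pi}}$-Lipschitz property of $\Phi$, the lower bound $\omega^\top A\omega\ge\lambda_{\min}(A)$, and Lemma~\ref{lemma:factorial_decay}, then the factor of $2$ from applying this estimate at both minimisers. The paper writes the final step as a three-term telescoping decomposition with the middle term non-positive rather than as your chain of inequalities, but these are the same argument.
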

	\begin{proof} For lighter expressions, we will introduce the following notation
	\[\widehat{R}_{(X,V)}^N(H_{u,\omega,b}) =\frac{1}{m}\sum_{i=1}^m 
	\Phi\left(-v_i\cdot
	\frac{\sum\limits_{k=0}^{N}\omega^{\top}{W_0^k}u\left(\int_0^T \frac{(T-s)^k}{k!}x(s)\mathrm{d}s\right)+b}{\sqrt{\omega^\top A \omega}}
	\right).
	\]
	The inequality $\widehat{R}_{(X,V)}(H_{u_0,\omega_0,b_0}) \leq \widehat{R}_{(X,V)}(\bar{u},\bar{\omega},\bar{b})$ is a direct consequence of the definition of $(u_0,\omega_0,b_0)$. We decompose the difference of these two terms in the following way
	\[
	\begin{array}{rcl}
	\widehat{R}_{(X,V)}(H_{\bar{u},\bar{\omega},\bar{b}})-\widehat{R}_{(X,V)}(H_{u_0,\omega_0,b_0})
	&=&
	\widehat{R}_{(X,V)}(H_{\bar{u},\bar{\omega},\bar{b}})-\widehat{R}_{(X,V)}^N(H_{\bar{u},\bar{\omega},\bar{b}}) \\
	&+&\widehat{R}_{(X,V)}^N(H_{\bar{u},\bar{\omega},\bar{b}})-\widehat{R}_{(X,V)}^N(H_{u_0,\omega_0,b_0})\\
	&+&\widehat{R}_{(X,V)}^N(H_{u_0,\omega_0,b_0})-\widehat{R}_{(X,V)}(H_{u_0,\omega_0,b_0}).\\
	\end{array} 
	\]
	By the definition of $(\bar{u},\bar{\omega},\bar{b})$, the second difference is non-positive,
	\[ \widehat{R}_{(X,V)}^N(H_{\bar{u},\bar{\omega},\bar{b}})-\widehat{R}_{(X,V)}^N(H_{u_0,\omega_0,b_0})\leq 0.\]
	Let $(u,\omega,b) \in \mathcal{P}_{\Theta, \Lambda}$. As $\Phi$ is $\frac{1}{\sqrt{2\pi}}$-Lipschitz, we have
	\[
	\left| \widehat{R}_{(X,V)}(H_{u,\omega,b}) - \widehat{R}_{(X,V)}^N(H_{u,\omega,b})\right|
	\leq \frac{1}{\sqrt{2\pi} m} \sum_{i=1}^m 
	\left|\frac{\inner{\nu_{x_i,u}}{\omega}- \sum\limits_{k=0}^{N}\omega^{\top}{W_0^k}u\left(\int_0^T \frac{(T-s)^k}{k!}x(s)\mathrm{d}s\right)}{\sqrt{\omega^\top A \omega}}\right|.\]
	For each $i\in [\![1,m]\!]$, the following holds by Lemma \ref{lemma:factorial_decay} and the assumptions on the parameters $\omega$ and $u$ and the path $x_i$
	\[\left|{\inner{\nu_{x_i,u}}{\omega}- \sum\limits_{k=0}^{N}\omega^{\top}{W_0^k}u\left(\int_0^T \frac{(T-s)^k}{k!}x(s)\mathrm{d}s\right)}\right|\leq \Lambda R
	e^{\left\|W_0\right\| T} \frac{\left(\left\|W_0\right\| T\right)^{N+1}}{(N+1)!}.\]
	The result is then directly obtained by applying the above bounds to the vectors $(\bar{u},\bar{\omega},\bar{b})$ and $(u_0,\omega_0,b_0)$.
	\end{proof}	
	Theorem \ref{theo:optim_sig_trunc} demonstrates then the possible power of the application of the signature-based decomposition of the empirical risk. In our setting, this technique avoids for example the expensive computation of $\nu_{x,u}$ (as integrals of time-dependent matrix exponentials) for each path in the training set and replaces it with the computation and storage of the powers $(W_0^k)_{k\leq N}$. However, in this still simple setting, we will not base our optimisation techniques on this method and will reserve its application for the non-linear case where exact formulae are not available.
	%%%%%%%%%%%%%%%%%%%%%%%%%%%%%%%%%%%%%%%%%%%%%%%%%%%%%%%
	%%%%%%%%%%%%%%%%%%%%%%%%%%%%%%%%%%%%%%%%%%%%%%%%%%%%%%%%
	%%
	% Numerical results
	%%
	%%%%%%%%%%%%%%%%%%%%%%%%%%%%%%%%%%%%%%%%%%%%%%%%%%%%%%%%
	%%%%%%%%%%%%%%%%%%%%%%%%%%%%%%%%%%%%%%%%%%%%%%%%%%%%%%%%
	\section{Numerical results}\label{sec:num_exp}	
	In this subsection, we present the results of some numerical experiments run on real and synthetic data. The focus will be on the effect of noise on the accuracy and the robustness of the RNN and on the verification of the theoretical bound obtained in Theorem \ref{thm:GenErrSto}.\\
	
	The application to real-world data will be demonstrated on the Japanese Vowels dataset\footnote{\href{https://archive.ics.uci.edu/ml/datasets/Japanese+Vowels}{ https://archive.ics.uci.edu/ml/datasets/Japanese+Vowels}}. This dataset contains speech recordings of nine male subjects pronouncing a combination of Japanese vowels. The recordings are in the form of $12$-dimensional ($r=12$) discrete time series with varying lengths. To create a binary classification problem of continuous time paths, we restrict ourselves to the recordings of the first two subjects and we reparametrise the time-series to the unit interval.\\
	For the synthetic data, we consider $5$-dimensional trigonometric polynomials ($r=5$)
	\[x(t)=\sum_{k=0}^N a_k \cos(kt)+\sum_{k=1}^N b_k \sin(kt)\]
	defined over the interval $[0,2\pi]$. Here, we take $N=6$ and the parameters $a_k$ and $b_k$ are chosen (uniformly randomly) in the intervals $I^r$ and $J^r$ respectively, where, for one class $I = [-0.2, 1]$ and $J = [-1, 0.2]$ and for the other  $I = [-1, 0.2]$ and $J = [-0.2, 1]$. We generate 70 samples for each class for our dataset.\\
	
	We take the dimension of the network to be $n=50$ (and the dimension $d$ of the Brownian motion is taken to be equal to $n$.) This is quite large for our simple synthetic dataset and appropriate for the Japanese Vowels dataset but still small compared to typical reservoirs where $n$ can be larger than $500$ to make the reservoir self-averaging. The entries of the connectivity matrix $W$ are drawn from independent Gaussian distributions $\mathcal{N}(0, \frac{0.9}{\sqrt{n}})$ (the choice of the value 0.9 is to insure the numerical stability of the system). To generate the noise matrix $\Sigma$, we first draw positive values $\lambda_1,\ldots, \lambda_n$ uniformly over $(0,1)$, then uniformly a random orthogonal matrix $U$ to output the matrix
	\begin{equation}\label{Eq:NoiseMatrixFormula}
	\Sigma = \delta \cdot U^\top \mathrm{diag}(\lambda_1,\ldots, \lambda_n) U.
	\end{equation}
	The parameter $\delta$ is a noise scale that has to be chosen carefully. A large value for $\delta$ makes the noise dominant and the RNN unable to reach an acceptable accuracy, while a too small value makes the system less robust.\\
	
	The ERM problem will be solved numerically using a gradient descent (GD) algorithm. Recall that, given a training sample $(X,V)=\{(x_i,v_i)\}_{i=1}^m$, we aim to minimise the quantity
	\[\frac{1}{m}\sum_{i=1}^m \Phi\left(-v_i\cdot\frac{\inner{\nu_{x_i,u}}{\omega}+b}{\sqrt{\omega^\top A \omega}}\right),\]
	as a function of the parameters $u$, $\omega$ and $b$. As discussed in Subsection \ref{subsec:Robust_Ana_ERM}, we impose the bound $\|u\|\leq 1$. To achieve a superior classification performance, we do not impose an a priori bound on $b$. Let us first say a few words about some simplifications in the implementation of the gradient descent (GD) algorithm in this simple linear case. If we consider the loss function associated with a single observation $(x,v)$ as a function of $u$, $\omega$ and $b$
	\[L(u,\omega,b)=\Phi\left(-v\frac{\inner{\nu_{x,u}}{\omega}+b}{\sqrt{\omega^\top A \omega}}\right),\]
	then we may explicitly compute the gradient of $L$ (for $i\in [\![1,n]\!]$ and $j\in [\![1,r]\!]$)
	\[\left\{\begin{array}{lcl}
	\frac{\partial L}{\partial u_{i,j}}(u,\omega,b)&=&-\frac{v}{\sqrt{2\pi}\sqrt{\omega^\top A \omega}}\exp\left(-\frac{1}{2}\left(\frac{\inner{\nu_{x,u}}{\omega}+b}{\sqrt{\omega^\top A \omega}}\right)^2 \right)\inner{\omega}{\nu_{x,e_{i,j}}},\\
	\frac{\partial L}{\partial \omega_i}(u,\omega,b)&=&
	-\frac{v}{\sqrt{2\pi}\sqrt{\omega^\top A \omega}}
	\exp\left(-\frac{1}{2}\left(\frac{\inner{\nu_{x,u}}{\omega}+b}{\sqrt{\omega^\top A \omega}}\right)^2 \right)
	 \left((\nu_{x,u})_i-\frac{(A\omega)_i(\inner{\omega}{\nu_{x,u}}+b)}{{\omega^\top A \omega}} \right),
	\\
	\frac{\partial L}{\partial b}(u,\omega,b)&=&-\frac{v}{\sqrt{2\pi}\sqrt{\omega^\top A \omega}}\exp\left(-\frac{1}{2}\left(\frac{\inner{\nu_{x,u}}{\omega}+b}{\sqrt{\omega^\top A \omega}}\right)^2 \right),\\
	\end{array}\right.
	\]
	where $e_{i,j}$ stands for the canonical $n\times r$ basis matrix $\left(\delta_{ik}\delta_{jl}\right)_{k\leq n, l\leq r}$ and $z_i$ stands for the $i$\textsuperscript{th} coordinate of the vector $z$. Apart from its simple expression, one can see that the averages $(\nu_{x,e_{i,j}})_{i\leq n, j\leq r}$ need to be computed for each example $x$ in the training set in order to run the algorithm. One can then exploit these to compute the mean $\nu_{x,u}$ at each iteration of the algorithm as a linear combination of the means $(\nu_{x,e_{i,j}})_{i\leq n, j\leq r}$ instead of a full new computation which can be heavy due to the presence of matrix exponentials. In our implementation, we used Scipy's minimize function with nonlinear constraints \footnote{\href{https://docs.scipy.org/doc/scipy/reference/generated/scipy.optimize.minimize.html}{ https://docs.scipy.org/doc/scipy/reference/generated/scipy.optimize.minimize.html}} which is a gradient descent algorithm with suitably adjusted step-sizes (based on the Sequential Least Squares Programming algorithm detailed in \cite{Kraft}). Due to the non-convexity of the problem, the algorithm is run a few times (typically $5$ times) with random initialisations of the parameters in order to avoid bad local minima. The solution with the smallest loss function is then chosen.\\

	The experiments consist of three parts:
	\begin{enumerate}
		\item first we check the accuracy achieved on a fixed-size testing set (30\% of the total available data) while increasing the size of the training set. 
		%Recall that the labels are randomly generated by the networks. More specifically, for a path $x$, its generated label is given by $v=\mathrm{sign}(\inner{y_u(T,x)}{\omega}+b)$, where $y_u(T,x)\sim \mathcal{N}\left(\nu_{x,u}, A \right)$. 
		Recall that the computed accuracies are random realisations depending on the results of the simulations (of the solution to the SDE (\ref{eq:MainSDE})).
		\item We verify the validity of the theoretical bound obtained in Theorem \ref{thm:GenErrSto} as a function of the size of the training set. More precisely, having computed some values for the parameters $u$, $\omega$ and $b$ and thus the corresponding hypothesis $H$, we compare the difference $\left|R(H)-\widehat{R}_{(X,V)}(H)\right|$	with the theoretical bound (which is valid with high probability)
	\[
	\frac{4}{\sqrt{2\pi m \lambda_{\min}(A)}} 
	\left(\Theta + 
	R\left(\int_0^t\left\|e^{W_0(t-s)}\right\|^2\mathrm{d}s\right)^{1/2} \right)
	 +\frac{2+5\sqrt{\log(2/\delta)/2}}{\sqrt{m}}.
	\]
	In the above, the true risk $R(H)$ is computed as an empirical risk over the testing set. The value $R$ is computed as the maximum of the $L^2$-norm of the paths in the whole dataset while $\Theta$ is computed as the largest value obtained for $|b|$ in the accuracy experiment detailed above. We take the value $\delta= 0.01$ so that the bound holds with probability $0.99$.
	\item We verify the robustness of the learning in the following way. We train the network with a corrupt dataset where a portion of the labels of the training inputs has been changed, then compute the resulting accuracy based on the test set. This robustness test is quite simple and does not show the full potential of the architecture but we suspect that such robustness also holds when the input paths are themselves being perturbed. For other measures of robustness, we refer the reader to the previously cited references and the additional references \cite{BNL, BNSJT, DDSV} that look into the robustness of training with corrupted datasets.
	\end{enumerate}	

	\begin{figure}[t]
	\centering
	\includegraphics[width=4.5cm, height=4.55cm]{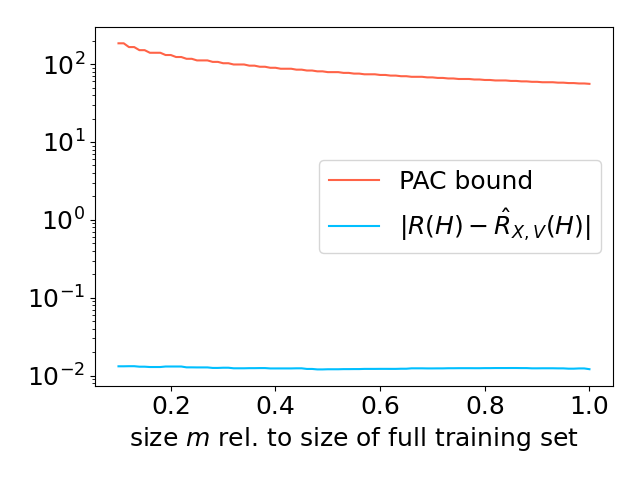}
	\includegraphics[scale=0.45]{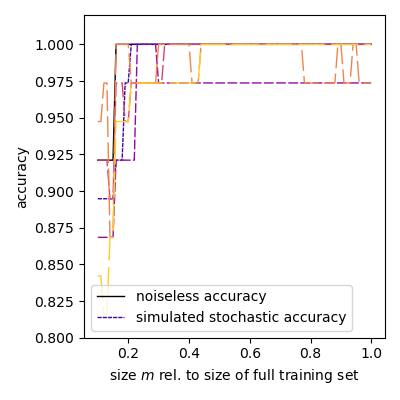}
	\includegraphics[scale=0.45]{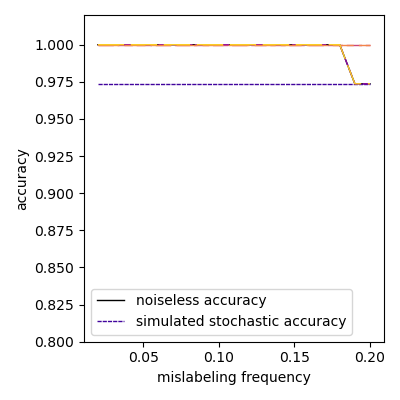}
	\\[\smallskipamount]
	\includegraphics[width=4.5cm, height=4.55cm]{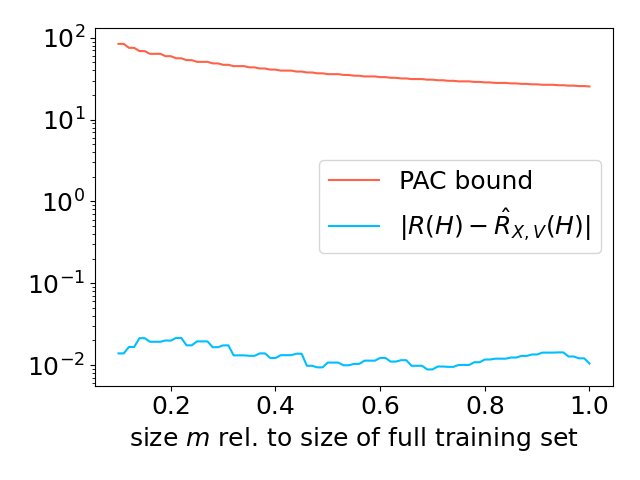}
	\includegraphics[scale=0.45]{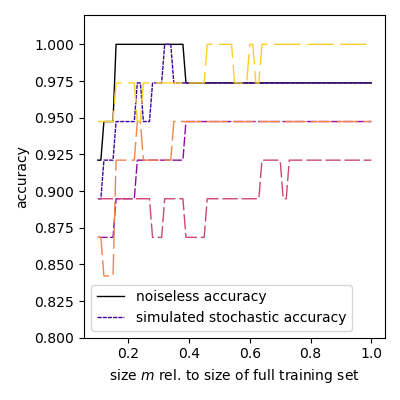}
	\includegraphics[scale=0.45]{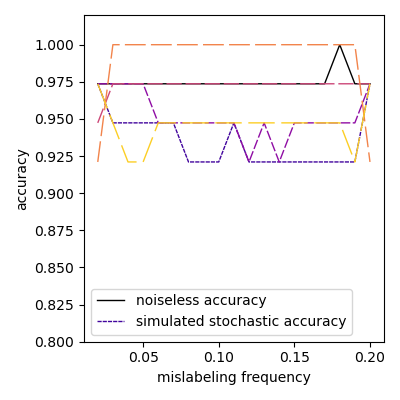}
	\\[\smallskipamount]
	\includegraphics[width=4.5cm, height=4.55cm]{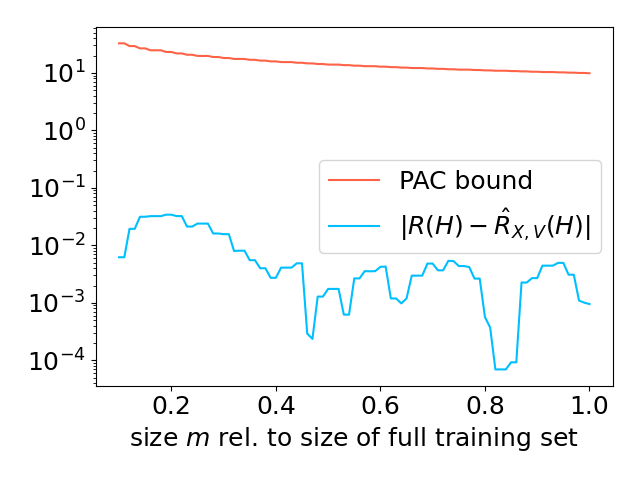}
	\includegraphics[scale=0.45]{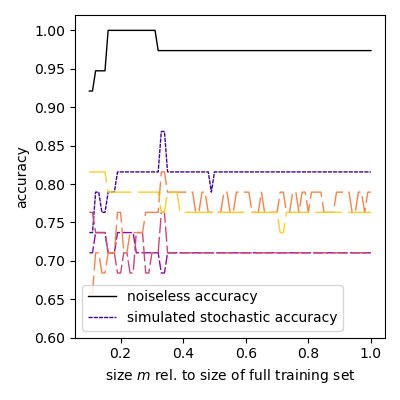}
	\includegraphics[scale=0.45]{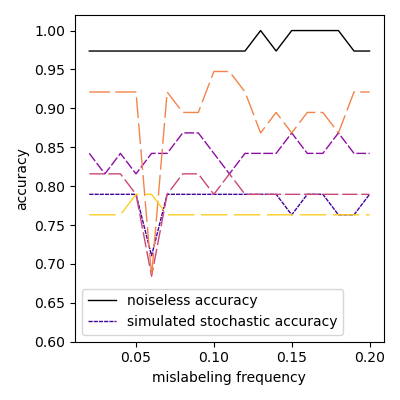}
	\caption{Verification of the theoretical PAC-bound (Theorem \ref{thm:GenErrSto}) and accuracy of the network for the Japanese vowels dataset. Each row of figures represents the results for different noise scales (as defined in Equation (\ref{Eq:NoiseMatrixFormula})); 1, 1.5 and 2.5 respectively. The middle column shows the evolution of the accuracy of the network when increasing the size of the training dataset. The right column shows how an increasing portion (described as mislabelling frequency) of corrupt labels in the training dataset affects the accuracy on the test set. Solid lines show the accuracy of classification by a noiseless RNN (trained as a stochastic RNN). Dotted lines show the results of simulations of the realised labels by the stochastic RNN.}
	\label{Fig:JapVowPlot}
	\end{figure}

	We start first by the experiments on the Japanese vowels dataset with three choices of noise scales $\delta\in \{1,1.5,2.5\}$ in (\ref{Eq:NoiseMatrixFormula}). The resulting plots are shown in Figure \ref{Fig:JapVowPlot}. In the accuracy plots:
	\begin{itemize}
	\item the solid lines show the accuracy based on classifying the averages $\nu_{x,u}$ of the paths in the test dataset with the obtained hyperplane $h_{\omega,b}$. In other words, the generated label for a path $x$ is $v=\mathrm{sign}(\inner{\nu_{x,u}}{\omega}+b)$.  As $\nu_{x,u}$ is the hidden state of a noiseless RNN\footnote{ Note however that a noiseless RNN will train differently and consequently produce different parameters $u$, $\omega$ and $b$ in general.}, we refer to this scenario as noiseless accuracy or NRNN.
	\item The dotted lines show the accuracy of 5 simulated labels produced by the stochastic RNN. Here, the generated label for a path $x$ is $v=\mathrm{sign}(\inner{y_u(T,x)}{\omega}+b)$ where $y_u(T,x)$ is simulated as $y_u(T,x)\sim \mathcal{N}\left(\nu_{x,u}, A \right)$. We refer to this scenario as simulated stochastic accuracy.
	\end{itemize}

	\begin{table}[t]
	\centering
	\begin{tabular}{|c|c||c|c|c|c||c|c|c|c|}
 	\hline
 	&\multicolumn{4}{|c|}{Accuracy}&\multicolumn{4}{|c|}{Robustness test accuracies}&\\
	 \hline
	 Noise scale & NRNN & min & max & avg. & NRNN & min & max & avg. & Ratio\\
	 \hline  
1&	100\% &	97.37\% &	100\% &	99.21\% &	100\% &	94.74\% &	100\% &	99.21\%& 100\%\\ \hline 
1.5	& 97.37\% &	86.84\% &	100\% &	95.00\% &	97.37\% &	92.11\% &	100\% &	95.79\%& 100.83\%\\ \hline 
2.5&	97.37\% &	65.79\% &	86.84\% &	75.26\% &	97.37\% &	76.32\% &	94.74\% &	83.95\%& 111.54\%\\ \hline 
	\end{tabular}
	\caption{Numerical values when working the Japanese vowels dataset. The first column shows the noise scale (see Equation (\ref{Eq:NoiseMatrixFormula})), the next four columns give respectively the accuracy of classification by a noiseless RNN (trained as a stochastic RNN), then the lowest, the largest and the average accuracy given by the (stochastic) RNN over 10 simulations. The following four columns show the same numbers after training the RNN with 10\% of mislabelled data. The last column shows the ratio of the average accuracy in the robustness test (after training with some corrupted labels) over the normal average accuracy (after training with the correct labels).}
	\label{Table:JapVow}
	\end{table}
		
	 Note that because of the discontinuity of the hyperplane classifying function, the former is not expected to be an average of the latter. For readability, we report some numerical values for 10 simulations in Table \ref{Table:JapVow}:
	 \begin{itemize}
	 \item the first column refers to the value of the noise scale $\delta$ in (\ref{Eq:NoiseMatrixFormula}),
	 \item the group of columns ``Accuracy'' reports the results of the experiment after training with the entire training set while the group of columns ``Robustness test accuracies'' reports the accuracy of the network after training with the entire training set with 10\% of mislabelled data,
	 \item the column NRNN gives the accuracy of classification of the averages $\nu_{x,u}$ with the hyperplane $h_{\omega,b}$ (these labels are given by as $v=\mathrm{sign}(\inner{\nu_{x,u}}{\omega}+b)$),
	 \item the columns ``min'', ``max'' and ``avg.'' give, respectively, the smallest, the largest and the average observed accuracy over the 10 simulations (the labels are simulated as $v\sim\mathrm{sign}(\inner{\mathcal{N}\left(\nu_{x,u}, A \right)}{\omega}+b)$.)
	 \end{itemize}
	 As usually observed when working with PAC-bounds based on the Rademacher complexity, the theoretical bound obtained in Theorem \ref{thm:GenErrSto} can be very loose. We note however in Figure \ref{Fig:JapVowPlot} that the difference gets smaller with larger noise scales. In Table \ref{Table:JapVow}, we note an almost perfect noiseless accuracy even in the presence of noise. A striking fact, however, is that, on average and in the presence of noise, there is no remarkable loss in the performance of the RNN when training with some mislabelled data.\\

		\begin{figure}[t]
	\centering
	\includegraphics[width=4.5cm, height=4.55cm]{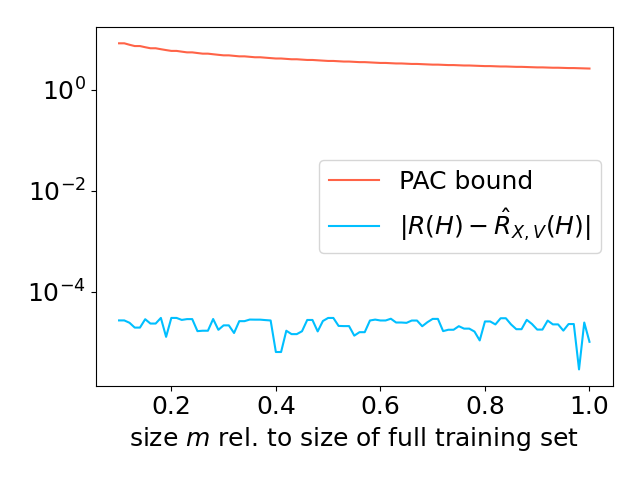}
	\includegraphics[scale=0.45]{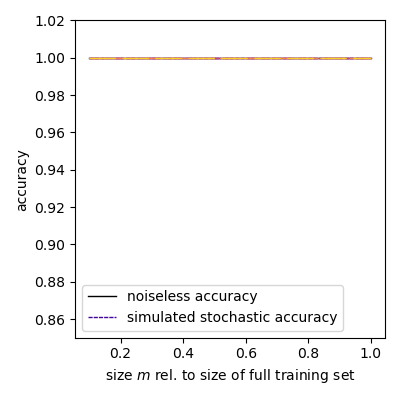}
	\includegraphics[scale=0.45]{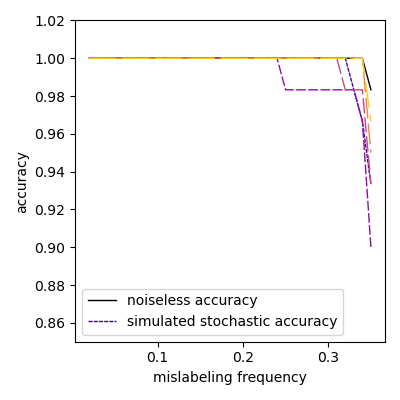}
	\\[\smallskipamount]
	\includegraphics[width=4.5cm, height=4.55cm]{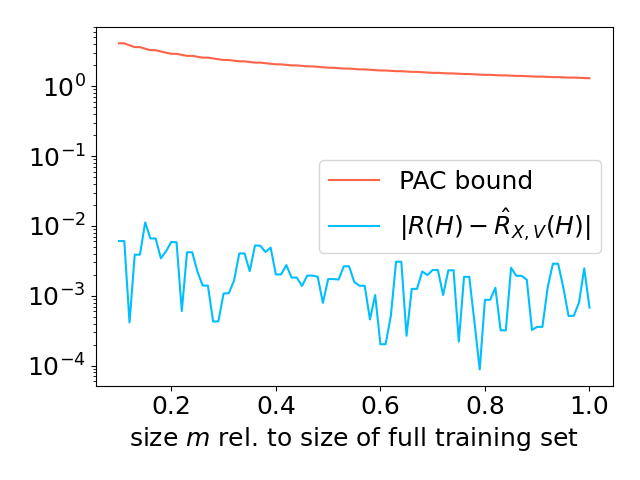}
	\includegraphics[scale=0.45]{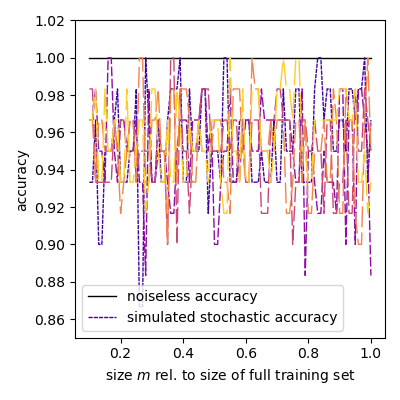}
	\includegraphics[scale=0.45]{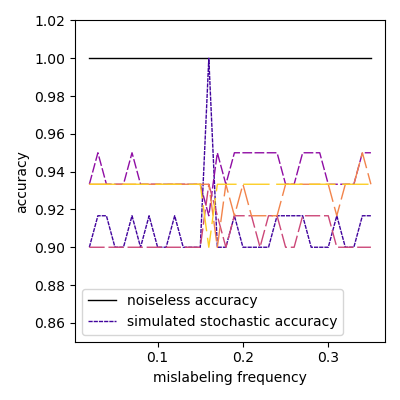}
	\\[\smallskipamount]
	\includegraphics[width=4.5cm, height=4.55cm]{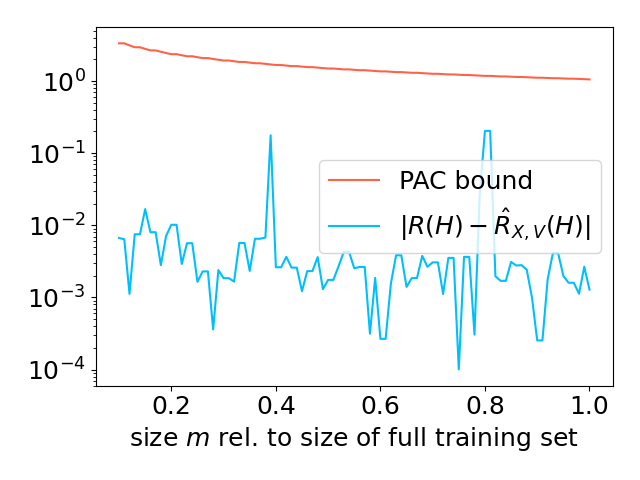}
	\includegraphics[scale=0.45]{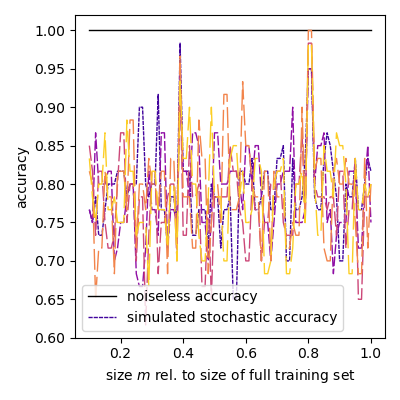}
	\includegraphics[scale=0.45]{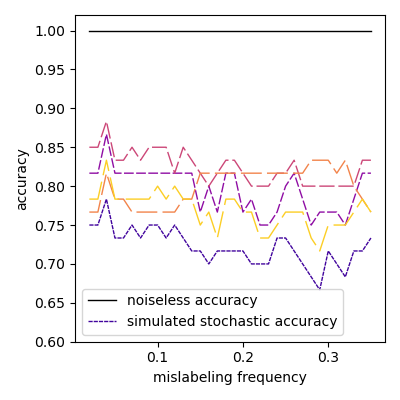}	
	\caption{Verification of the theoretical PAC-bound (Theorem \ref{thm:GenErrSto}) and accuracy of the network for the synthetic dataset. Each row of figures represents the results for different noise scales (as defined in Equation (\ref{Eq:NoiseMatrixFormula})); 2, 4 and 6 respectively. The middle column shows the evolution of the accuracy of the network when increasing the size of the training dataset. The right column shows how an increasing portion (described as mislabeling frequency) of corrupt labels in the training dataset affects the accuracy on the test set. Solid lines show the accuracy of classification by a noiseless RNN (trained as a stochastic RNN). Dotted lines show the results of simulations of the realised labels by the stochastic RNN.}
	\label{Fig:SinusPlot}
	\end{figure}

	\begin{table}[t]
	\centering
	\begin{tabular}{|c|c||c|c|c|c||c|c|c|c|}
 	\hline
 	&\multicolumn{4}{|c|}{Accuracy}&\multicolumn{4}{|c|}{Robustness test accuracies}&\\
	 \hline
	 Noise scale & NRNN & min & max & avg. & NRNN & min & max & avg. & Ratio\\
	 \hline  
2 &	100\% &	100\% &	100\% &	100\% &	100\% &	100\% &	100\% &	100\% &	100.00\% \\
	 \hline
4 &	100\% &	88.33\% &	98.33\% &	94.83\% &	100\% &	90.00\% &	100.00\% &	92.83\% &	97.89\% \\
	 \hline
6 &	100\% &	73.33\% &	90.00\% &	79.83\% &	100\% &	71.67\% &	81.67\% &	76.67\% &	96.03\% \\
	 \hline
	\end{tabular}
	\caption{Numerical values when working the synthetic dataset. The first column shows the noise scale (see Equation (\ref{Eq:NoiseMatrixFormula})), the next four columns give respectively the accuracy of classification by a noiseless RNN (trained as a stochastic RNN), then the lowest, the largest and the average accuracy given by the (stochastic) RNN over 10 simulations. The following four columns show the same numbers after training the RNN with 10\% of mislabelled data. The last column shows the ratio of the average accuracy in the robustness test (after training with some corrupted labels) over the normal average accuracy (after training with the correct labels).}
	\label{Table:Sinus}
	\end{table}

	Training with synthetic data showed a higher tolerance to the noise scales. In Figure \ref{Fig:SinusPlot}, we show the result of the same experiments as before with the noise scales $\delta\in \{2,4, 6\}$. Numerical values are reported on Table \ref{Table:Sinus} (with 15\% for mislabelled data in the robustness test). We observe again the validity of the theoretical PAC-bound obtained in Theorem \ref{thm:GenErrSto} (with the difference getting smaller with higher noise scales) and notice the same persistence in the quality of the labels generated by the RNN when training with mislabelled data.
		%%%%%
	%% Bibliography
	%%%%%
%	\clearpage
	\bibliographystyle{alpha}
	\bibliography{bibLinRes}

\newcommand{\etalchar}[1]{$^{#1}$}
\begin{thebibliography}{GKMO18}

\bibitem[BBL02]{BBL}
Peter~L Bartlett, St{\'e}phane Boucheron, and G{\'a}bor Lugosi.
\newblock Model selection and error estimation.
\newblock {\em Machine Learning}, 48(1):85--113, 2002.

\bibitem[BHKS20]{BHKS}
Kaushik Bhattacharya, Bamdad Hosseini, Nikola~B Kovachki, and Andrew~M Stuart.
\newblock Model reduction and neural networks for parametric {PDE}s.
\newblock {\em SMAI Journal of Computational Mathematics}, 7:121--157, 2020.

\bibitem[BNL12]{BNL}
Battista Biggio, Blaine Nelson, and Pavel Laskov.
\newblock Poisoning attacks against support vector machines, 2012.

\bibitem[BNS{\etalchar{+}}06]{BNSJT}
Marco Barreno, Blaine Nelson, Russell Sears, Anthony~D Joseph, and J~Doug
  Tygar.
\newblock Can machine learning be secure?
\newblock In {\em Proceedings of the 2006 ACM Symposium on Information,
  computer and communications security}, pages 16--25, 2006.

\bibitem[Che57]{Chen}
Kuo-Tsai Chen.
\newblock Integration of paths, geometric invariants and a generalized
  {B}aker-{H}ausdorff formula.
\newblock {\em Ann. of Math. (2)}, 65:163--178, 1957.

\bibitem[CK16]{CK}
Ilya Chevyrev and Andrey Kormilitzin.
\newblock A primer on the signature method in machine learning.
\newblock {\em arXiv preprint arXiv:1603.03788}, 2016.

\bibitem[CV95]{CV}
Corinna Cortes and Vladimir Vapnik.
\newblock Support-vector networks.
\newblock {\em Machine learning}, 20(3):273--297, 1995.

\bibitem[DCR{\etalchar{+}}18]{DCREA}
Brian DePasquale, Christopher~J Cueva, Kanaka Rajan, G~Sean Escola, and
  LF~Abbott.
\newblock full-{FORCE}: A target-based method for training recurrent networks.
\newblock {\em PloS one}, 13(2):e0191527, 2018.

\bibitem[DDSV04]{DDSV}
Nilesh Dalvi, Pedro Domingos, Sumit Sanghai, and Deepak Verma.
\newblock Adversarial classification.
\newblock In {\em Proceedings of the tenth ACM SIGKDD international conference
  on Knowledge discovery and data mining}, pages 99--108, 2004.

\bibitem[Doy92]{Doya}
Kenji Doya.
\newblock Bifurcations in the learning of recurrent neural networks 3.
\newblock {\em learning (RTRL)}, 3:17, 1992.

\bibitem[FFF18]{FFF}
Alhussein Fawzi, Omar Fawzi, and Pascal Frossard.
\newblock Analysis of classifiers’ robustness to adversarial perturbations.
\newblock {\em Machine Learning}, 107(3):481--508, 2018.

\bibitem[FN93]{FN}
Ken-ichi Funahashi and Yuichi Nakamura.
\newblock Approximation of dynamical systems by continuous time recurrent
  neural networks.
\newblock {\em Neural networks}, 6(6):801--806, 1993.

\bibitem[GGO20a]{GGO2}
Lukas Gonon, Lyudmila Grigoryeva, and Juan-Pablo Ortega.
\newblock Approximation bounds for random neural networks and reservoir
  systems.
\newblock {\em arXiv preprint arXiv:2002.05933}, 2020.

\bibitem[GGO20b]{GGO}
Lukas Gonon, Lyudmila Grigoryeva, and Juan-Pablo Ortega.
\newblock Risk bounds for reservoir computing.
\newblock {\em J. Mach. Learn. Res.}, 21:Paper No. 240, 61, 2020.

\bibitem[GKMO18]{GKMO}
Frithjof Gressmann, Franz~J Kir{\'a}ly, Bilal Mateen, and Harald Oberhauser.
\newblock Probabilistic supervised learning.
\newblock {\em arXiv preprint arXiv:1801.00753}, 2018.

\bibitem[Gra13]{Graham}
Benjamin Graham.
\newblock Sparse arrays of signatures for online character recognition.
\newblock {\em arXiv preprint arXiv:1308.0371}, 2013.

\bibitem[HH20]{HH}
Hermanni H{\"a}lv{\"a} and Aapo Hyvarinen.
\newblock Hidden {M}arkov nonlinear {ICA}: Unsupervised learning from
  nonstationary time series.
\newblock In {\em Conference on Uncertainty in Artificial Intelligence}, pages
  939--948. PMLR, 2020.

\bibitem[HL10]{HL}
Ben Hambly and Terry Lyons.
\newblock Uniqueness for the signature of a path of bounded variation and the
  reduced path group.
\newblock {\em Ann. of Math. (2)}, 171(1):109--167, 2010.

\bibitem[HM17]{HM}
Aapo Hyvarinen and Hiroshi Morioka.
\newblock Nonlinear {ICA} of temporally dependent stationary sources.
\newblock In {\em Artificial Intelligence and Statistics}, pages 460--469.
  PMLR, 2017.

\bibitem[JH04]{JH}
Herbert Jaeger and Harald Haas.
\newblock Harnessing nonlinearity: Predicting chaotic systems and saving energy
  in wireless communication.
\newblock {\em Science}, 304(5667):78--80, 2004.

\bibitem[K{\etalchar{+}}88]{Kraft}
Dieter Kraft et~al.
\newblock A software package for sequential quadratic programming.
\newblock 1988.

\bibitem[KO19]{KO}
Franz~J. Kir\'{a}ly and Harald Oberhauser.
\newblock Kernels for sequentially ordered data.
\newblock {\em J. Mach. Learn. Res.}, 20:Paper No. 31, 45, 2019.

\bibitem[KP00]{KP}
Vladimir Koltchinskii and Dmitriy Panchenko.
\newblock Rademacher processes and bounding the risk of function learning.
\newblock In {\em High dimensional probability, {II} ({S}eattle, {WA}, 1999)},
  volume~47 of {\em Progr. Probab.}, pages 443--457. Birkh\"{a}user Boston,
  Boston, MA, 2000.

\bibitem[KS98]{KS}
Pascal Koiran and Eduardo~D. Sontag.
\newblock Vapnik-{C}hervonenkis dimension of recurrent neural networks.
\newblock {\em Discrete Appl. Math.}, 86(1):63--79, 1998.

\bibitem[LCL07]{CLL}
Terry~J. Lyons, Michael Caruana, and Thierry L\'{e}vy.
\newblock {\em Differential equations driven by rough paths}, volume 1908 of
  {\em Lecture Notes in Mathematics}.
\newblock Springer, Berlin, 2007.
\newblock Lectures from the 34th Summer School on Probability Theory held in
  Saint-Flour, July 6--24, 2004, With an introduction concerning the Summer
  School by Jean Picard.

\bibitem[LHEL21]{LHL}
Zhong Li, Jiequn Han, Weinan E, and Qianxiao Li.
\newblock On the curse of memory in recurrent neural networks: Approximation
  and optimization analysis.
\newblock In {\em International Conference on Learning Representations}, 2021.

\bibitem[Lim21]{Lim}
Soon~Hoe Lim.
\newblock Understanding recurrent neural networks using nonequilibrium response
  theory.
\newblock {\em J. Mach. Learn. Res.}, 22:47--1, 2021.

\bibitem[LLN13]{LLN}
Daniel Levin, Terry Lyons, and Hao Ni.
\newblock Learning from the past, predicting the statistics for the future,
  learning an evolving system.
\newblock {\em arXiv preprint arXiv:1309.0260}, 2013.

\bibitem[LT91]{LT}
Michel Ledoux and Michel Talagrand.
\newblock {\em Probability in {B}anach spaces}, volume~23 of {\em Ergebnisse
  der Mathematik und ihrer Grenzgebiete (3)}.
\newblock Springer-Verlag, Berlin, 1991.

\bibitem[Lyo98]{Lyons}
Terry~J. Lyons.
\newblock Differential equations driven by rough signals.
\newblock {\em Rev. Mat. Iberoamericana}, 14(2):215--310, 1998.

\bibitem[MJS07]{MJS}
Wolfgang Maass, Prashant Joshi, and Eduardo~D Sontag.
\newblock Computational aspects of feedback in neural circuits.
\newblock {\em PLoS Comput Biol}, 3(1):e165, 2007.

\bibitem[MRT12]{MRT}
Mehryar Mohri, Afshin Rostamizadeh, and Ameet Talwalkar.
\newblock {\em Foundations of machine learning}.
\newblock Adaptive Computation and Machine Learning. MIT Press, Cambridge, MA,
  2012.

\bibitem[NKD{\etalchar{+}}20]{NKDGRH}
Sandra Nestler, Christian Keup, David Dahmen, Matthieu Gilson, Holger Rauhut,
  and Moritz Helias.
\newblock Unfolding recurrence by {Green}’s functions for optimized reservoir
  computing.
\newblock In H.~Larochelle, M.~Ranzato, R.~Hadsell, M.~F. Balcan, and H.~Lin,
  editors, {\em Advances in Neural Information Processing Systems}, volume~33,
  pages 17380--17390. Curran Associates, Inc., 2020.

\bibitem[OS21]{OS}
Harald Oberhauser and Alexander Schell.
\newblock Nonlinear independent component analysis for continuous-time signals.
\newblock {\em arXiv preprint arXiv:2102.02876}, 2021.

\bibitem[RXY{\etalchar{+}}20]{RXYDL}
Aditi Raghunathan, Sang~Michael Xie, Fanny Yang, John Duchi, and Percy Liang.
\newblock Understanding and mitigating the tradeoff between robustness and
  accuracy.
\newblock {\em arXiv preprint arXiv:2002.10716}, 2020.

\bibitem[SCS88]{SCS}
H.~Sompolinsky, A.~Crisanti, and H.-J. Sommers.
\newblock Chaos in random neural networks.
\newblock {\em Phys. Rev. Lett.}, 61(3):259--262, 1988.

\bibitem[SSBD14]{SB}
Shai Shalev-Shwartz and Shai Ben-David.
\newblock {\em Understanding machine learning: From theory to algorithms}.
\newblock Cambridge {U}niversity {P}ress, 2014.

\bibitem[TSE{\etalchar{+}}19]{TSETM}
Dimitris Tsipras, Shibani Santurkar, Logan Engstrom, Alexander Turner, and
  Aleksander Madry.
\newblock Robustness may be at odds with accuracy.
\newblock In {\em International Conference on Learning Representations}, 2019.

\end{thebibliography}
\end{document}